\newtheorem{theorem}{Theorem}
\newtheorem{lemma}{Lemma}
\newtheorem{definition}{Definition}
\newcommand{\matx}{\mathbf{X}}
\newcommand{\maty}{\mathbf{Y}}
\newcommand{\matz}{\mathbf{Z}}
\newcommand{\vecx}{\mathbf{x}}
\newcommand{\vecy}{\mathbf{y}}
\newcommand{\vecz}{\mathbf{z}}
\newcommand{\doms}{\mathcal{S}}
\newcommand{\domt}{\mathcal{T}}
\newcommand{\domd}{\mathcal{D}}
\newcommand{\setb}{\mathfrak{B}}
\newcommand{\seti}{\mathcal{I}}
\newcommand{\setk}{\mathcal{K}}
\newcommand{\loss}{\mathcal{L}}
\newcommand{\lossinter}{\mathcal{L}_{\mathrm{inter}}}
\newcommand{\lossintra}{\mathcal{L}_{\mathrm{intra}}}
\newcommand{\lossfunwr}{\mathcal{L}_{\mathrm{WR}}}
\newcommand{\diswr}{d_{\mathrm{W}}}
\newcommand{\disj}{d_{\mathrm{J}}}
\newcommand{\fr}{\mathbb{R}}
\DeclareMathOperator*{\fe}{\mathbb{E}}
\DeclareMathOperator*{\fp}{\mathbb{P}}
\newcommand{\risk}{\mathrm{R}}
\newcommand{\nnc}{h_\mathrm{NN}}
\title{Unsupervised Domain Adaptive Re-Identification: Theory and Practice}
\author{
Liangchen Song$^{12}$\thanks{Equally contribution}\quad Cheng Wang$^{23*}$ 
\quad Lefei Zhang${}^{1}$\quad Bo Du${}^{1}$ \\
\textbf{\quad Qian Zhang${}^{2}$\quad Chang Huang${}^{2}$
\quad Xinggang Wang${}^{3}$}
\\
${}^{1}$Wuhan University\quad ${}^{2}$Horizon Robotics\quad ${}^{3}$ Huazhong Univ. of Science and Technology  \\
\texttt{\{wangcheng,xgwang\}@hust.edu.cn,\{zhanglefei,remoteking\}@whu.edu.cn}  \\ 
\texttt{\{liangchen.song,qian01.zhang,chang.huang\}@horizon.ai}  \\
}
\begin{document}

\maketitle

\begin{abstract}
We study the problem of unsupervised domain adaptive re-identification (re-ID) which is an active topic in computer vision but lacks a theoretical foundation. We first extend existing unsupervised domain adaptive classification theories to re-ID tasks. Concretely, we introduce some assumptions on the extracted feature space and then derive several loss functions guided by these assumptions. To optimize them, a novel self-training scheme for unsupervised domain adaptive re-ID tasks is proposed. It iteratively makes guesses for unlabeled target data based on an encoder and trains the encoder based on the guessed labels. Extensive experiments on unsupervised domain adaptive person re-ID and vehicle re-ID tasks with comparisons to the state-of-the-arts confirm the effectiveness of the proposed theories and self-training framework. Our code is available \href{https://github.com/LcDog/DomainAdaptiveReID}{on GitHub}.
\end{abstract}

\section{Introduction}
To re-identify a particular is to identify it as (numerically) the same particular as one encountered on a previous occasion\cite{plantinga1961things}. 
Image/video re-identification (re-ID) is a fundamental problem in computer vision and re-ID techniques serve as an indispensable tool for numerous real life applications, for instance, person re-ID for public safety \cite{Zheng2016Oct}, travel time measurement via vehicle re-ID \cite{coifman1998vehicle}.
The key component of re-ID tasks is the notion of identity, 
which makes re-ID tasks quite different from traditional classification tasks in the following ways:
Firstly, determining the label involves two samples, i.e., there is no label when an individual sample is given;
secondly, in re-ID tasks the samples in test sets belong to a previously unseen identity while in classification tasks the test samples all fall into a known class. 
Take the person re-ID task as an example, our target is to spot a person of interest in an image set, which do not have a specific class and is not accessible in the training set.

In many practical situations, we face the problem that the training data and testing data are in different domains.
Going back to the person re-ID example, data from a new camera is placed in a new environment, i.e., a new domain is added, which are too costly and impractical to be annotated, a serviceable re-ID model should have a satisfactory accuracy on unlabeled data.
Unsupervised domain adaptation means that learning a model for target domain when given both a fully annotated source dataset and an unlabeled target dataset. 
Existing algorithms for unsupervised domain adaptive re-ID tasks typically learn domain-invariant representation or generate data for target domain through some newly designed networks, which are practical solutions but lack theoretical support \cite{Li2018Apr, Wang2018Mar, Deng2017Nov}.
Meanwhile, current theoretical analysis of unsupervised domain adaptation are only concerned with classification tasks \cite{BenDavid2010,BenDavid2014,Mansour2009Feb}, which is not suitable for re-ID tasks. A theoretical guarantee of the domain adaptive re-ID problem is in need.

In this paper, we first theoretically analyze unsupervised domain adaptive re-ID tasks based on \cite{BenDavid2014}, in which three assumptions are introduced for classification. 
It is assumed that the source domain and the target domain share a same label space in \cite{BenDavid2014}. However, in re-ID tasks, the notion of label is defined for pairwise data and the label indicates a data pair belongs to a same ID or not. We adapt the three assumptions for the input space of pairwise data.
Moreover, instead of imposing the assumptions on the raw data as \cite{BenDavid2014}, we assume the resemblance between the feature space of two domains. 
The first assumption is that the criteria of classifying feature pairs is the same between two domains, which is referred to as covariate assumption. The second one is Separately Probabilistic Lipschitzness, indicating that the feature pairs can be divided into clusters. And the last assumption is weight ratio, concerning the probability of existing a repeated feature among all the features from the two domains. 
Based on the three assumptions, we show the learnability of unsupervised domain adaptive re-ID tasks.
Moreover, since our guarantee is built on well extracted features from real images, the encoder, i.e. feature extractor, is trained via a novel self-training framework, which is originally proposed for NLP tasks \cite{mcclosky2006effective,mcclosky2006reranking}.
Concretely, we iteratively refine the encoder by making guesses on unlabeled target domain and then train the encoder with these samples.
In the light of the mentioned assumptions, we propose several loss functions on the encoder and samples with guessed label. And the problem of selecting which sample with guessed label to train is optimized by minimizing the proposed loss functions.  
For the Separately Probabilistic Lipschitzness assumption, we wish to minimize the intra-cluster and inter-cluster distance. Then the sample selecting problem is turned into data clustering problem and minimizing loss functions is transformed into finding a distance metric for the data.
Also, another metric for Weight Ratio is designed. After combining the two metrics together, we have a distance evaluating the confidence of the guessed labels. 
Finally, the DBSCAN clustering method \cite{coifman1998vehicle} is employed to generate data clusters according to a threshold on the distance. 
With pseudo-labels on selected data cluster from target domain, the encoder is trained with triplet loss \cite{Weinberger2009}, which is effective for re-ID tasks.

We carry out experiments on diverse re-ID tasks, which demonstrate the priority of our framework.
First the well studied person re-ID task is tested and we show the adaptation results between two large scale datasets, i.e. Market-1501 \cite{Zheng2015Dec} and DukeMTMC-reID \cite{Ristani2016}. 
Then we evaluate our algorithm on vehicle re-ID task, in which larger datasets VeRi-776 \cite{Liu7553002} and PKU-VehicleID \cite{liu2016deep} are involved.
To sum up, the structure of our paper is shown in Figure \ref{fig:structure} and our contributions are as follows:
\vspace{-5pt}
\begin{itemize}
    \item We introduce the theoretical guarantees of unsupervised domain adaptive re-ID based on \cite{BenDavid2014}. A learnability result is shown under three assumptions that concerning the feature space. To the best of our knowledge, our paper is the first theoretical analysis work on domain adaptive re-ID tasks.
    \item We theoretically turn the goal of satisfying the assumptions into tractable loss functions on the encoder network and data samples.
    \item A self-training scheme is proposed to iteratively minimizing the loss functions. Our framework is applicable to all re-ID tasks and the effectiveness is verified on large-scale datasets for diverse re-ID tasks. 
\end{itemize}

\vspace{-5pt}
\subsection{Related work}

Unsupervised domain adaptation has been widely studied for decades and the algorithms are divided into four categories in a survey \cite{margolis2011literature}. 
Using the notions in the survey, our proposed method can be viewed as a combination of feature representation and self-training. 
Nevertheless, recently unsupervised domain adaptation is widely studied for the person re-ID task.

\vspace{-5pt}
\paragraph{Unsupervised domain adaptation and feature representation.} 
Feature representation based methods try to find a latent feature space shared between domains. In \cite{satpal2007domain}, they wish to minimize a distance between means of the two domains. In a more general manner, \cite{pan2008transfer} and \cite{chen2009extracting} try to find a feature space in which the source and target distributions are similar and the statistic Maximum Mean Discrepancy (MMD) is employed. Also, \cite{ganin2016domain} utilize features that cannot discriminate between source and target domains. Our method and these methods have a same intuition that some features from the source and target domain are generalizable. However, unlike these methods, the process of approximating the intuition in our method is in an iterative manner and we do not directly optimize on the distribution of target domain features. 

\vspace{-5pt}
\paragraph{Unsupervised domain adaptation and self-training.}
Self-training methods make guesses on target domain and iteratively refine the guesses and are closely related to the Expectation Maximization (EM) algorithm \cite{nigam2006semi}. In \cite{tan2009adapting}, they increase the weight on the target data at each iteration, which is actually altering the relative contribution of source and target domains. A more similar work is \cite{bacchiani2003unsupervised}, in which the model is initially trained on source domain, and then the top-1 recognition hypotheses on the target domain are used for adapting their language model. In our algorithm, we do not guess the labels since different re-ID datasets have totally different labels (identities) and instead we perform clustering on the data. 

\vspace{-5pt}
\paragraph{Unsupervised domain adaptive person re-ID.} 
Due to the rapid development of person re-ID techniques, some useful unsupervised domain adaptive person re-ID methods are proposed. \cite{peng2016unsupervised} adopt a multi-task dictionary learning scheme to learn a view-invariant representation. Besides, generative models are also applied to domain adaptation in \cite{Deng2017Nov,Wei_2018_CVPR}. Wang et al. \cite{Wang2018Mar} design a network learning an attribute-semantic and identity discriminative feature representation. Similarly, Li et al. \cite{Li2018Apr} leverages information across datasets and derives domain-invariant features via an adaptation and a re-ID network. 
Though all the above methods solve the adaptation problem, they are not supported by a theoretical framework and their generalization abilities are not verified in other re-ID tasks. 
Fan et al. \cite{Fan2017May} propose a progressive unsupervised learning method consisting of clustering and fine-tuning the network, which is similar to our self-training scheme. However, they only focuses on unsupervised learning, not unsupervised domain adaptation. In addition, their iteration framework is not guided by specific assumptions thus having no theoretical derived loss functions as ours.

\vspace{-5pt}
\section{Notations and Basic Definitions}\label{sec:not}
\vspace{-5pt}
In classification tasks, let $\matx \subseteq \fr^{d}$ be the input space and $\maty\subset\fr$ be the output space, and each sample from the input space is denoted by black lower case letters $\vecx\in\matx$. 
We denote source domain as $\doms$ and target domain as $\domt$, and both of them are probability distribution over the input space $\matx$.
Moreover, the real label of each sample is denoted by a labeling function $l:\matx\rightarrow\maty$. 
However, the above notations could not be directly used to analyze the re-ID tasks, because there is no same identity in two domains, i.e. $\doms$ and $\domt$ do not have the same output (label) space.
Fortunately, for re-ID tasks, by treating re-ID as classifying same or different data pairs we are still able to utilize the notations and former results with some simple reformulations.  

Specifically, in re-ID tasks, we have a training set consist of data pairs, which means that the input space is $(\matz,\matz)\subseteq\fr^{n\times n}$, and the output space is $\maty=\{0,1\}$, where $1$ means the identities in the pair are the same and $0$ means different. 
Observing that in re-ID tasks, the two domain indeed have some overlapping cues, such as color of clothes, wearing a backpack or not in person re-id. 
That is, we can encode the original data from the two domains with some feature variables or latent variables, and then it is reasonable to assume that distribution of features from two domains satisfy some criteria just as the assumptions used in \cite{BenDavid2014} for classification tasks. 
Formally, we denote the feature encoder as $\vecx(\cdot)$ and $\vecx: \matz\rightarrow\fr^d$, and then the labeling function is $l:\matx\times\matx\rightarrow\{0,1\}$, where $\matx\subseteq\fr^d$ is the extracted feature space.
For simplicity, we denote $l(\vecx(\vecz_1),\vecx(\vecz_2)) = l(\vecx_1,\vecx_2)$, where $\vecz_1, \vecz_2$ means two different raw data.
Note that the labeling function is symmetric, i.e. $l(\vecx_1,\vecx_2)=l(\vecx_2,\vecx_1)$. 

\vspace{-5pt}
\section{Assumptions and DA-Learnability}\label{sec:theo}
\vspace{-5pt}
In this section, we first introduce some assumptions reflecting how the source domain interacting with target domain. Then with these assumptions we show the learnability of unsupervised domain adaptive re-ID.

The first assumption is covariate shift, which means that the criteria of classifying data pairs are the same for source domain and target domain.
In other words, we have $l_\doms(\vecx)=l_\domt(\vecx)$ for classification tasks, and similarly we can define the covariate shift for re-id tasks on the extracted feature space. 

\begin{definition}[Covariate Shift]
We say that source and target distribution satisfy the covariate shift assumption if they have the same labeling function, i.e. if we have $l_\doms(\vecx_1,\vecx_2)=l_\domt(\vecx_1,\vecx_2)$. 
\end{definition} 
\vspace{-5pt}

Another assumption is inspired by the ``Probabilistic Lipschitzness'', which is originally proposed for semi-supervised learning in \cite{Urner2011} and then investigated with application to domain adaptation tasks in \cite{BenDavid2014}. This assumption captures the intuition that in a classification task, the data can be divided into label-homogeneous clusters and are separated by low-density regions. 
However, in re-id tasks, the labeling function is a multivariable function, thus the original Probabilistic Lipschitzness is not applicable. Note that the intuition of re-id tasks is that similar pairs can form as a cluster. That is, for an instance, the similar data can be divided into a cluster and the cluster is separated out from the data space with a low-density gap. Mathematically, we have the following definition.
\begin{definition}[Separately Probabilistic Lipschitzness (SPL)]
Let $\phi:\fr\rightarrow[0,1]$ be monotonically increasing. Symmetric function $f:\matx\times\matx\rightarrow\fr$ is $\phi$-SPL with respect to a distribution $\domd$ on $\matx$, if for all $\lambda>0$, 
\begin{equation}\label{eqn:spl}
\fp_{\vecx_1,\vecx_2\sim\domd} 
\left( \exists \vecy:|f(\vecx_1,\vecx_2) - f(\vecx_1,\vecy)| \geqslant \lambda \|\vecx_2-\vecy\| \right)
\leqslant \phi (\lambda)
\end{equation} 
\end{definition}
\vspace{-5pt}

To ensure the learnability of the domain adaptation task, we still need a critical assumption concerning how much overlap there is between the source and target domain. 
We again follow the assumption used in \cite{BenDavid2014} on the source and target distribution, which is a relaxation of the pointwise density ratio between the two distributions. 

\begin{definition}[Weight Ratio]
Let $\setb \subseteq 2^{\matx}$ be a collection of subsets of the input space $\matx$ measurable with respect to both $\doms$ and $\domt$. For some $\eta > 0$ we define the $\eta$-weight ratio of the source distribution and the target distribution with respect to $\setb$ as 
\vspace{-5pt}
\begin{equation}
C_{\setb,\eta}(\doms,\domt) = \inf_{\substack{b\in\setb \\ \domt(b)\geqslant\eta}} \frac{\doms(b)}{\domt(b)}
\end{equation}
Further, we define the weight ratio of the source distribution and the target distribution with respect to $\setb$ as 
\begin{equation}
	C_{\setb}(\doms,\domt) = \inf_{\substack{b\in\setb \\ \domt(b)\neq 0}} \frac{\doms(b)}{\domt(b)}
\end{equation}
\end{definition}
\vspace{-5pt}

Following the notations in \cite{BenDavid2014}, we also assume that our domain is the unit cube $\matx = [0, 1]^d$ and let $\setb$ denote the set of axis aligned rectangles in $[0, 1]^d$. For our re-ID tasks, the risk of a classifier $h$ on target domain is 
\begin{equation}
\risk_\domt(h) = \fe_{\vecx_1,\vecx_2\sim\domt} [\loss(h(\vecx_1,\vecx_2),l(\vecx_1,\vecx_2))].
\end{equation}
Let the Nearest Neighbor classifier be $\nnc$, then the following theorem implies the learnability of domain adaptive re-ID, of which the proof is included in supplemental materials.

\begin{theorem}
Let the domain be the unit cube,$\matx = [0, 1]^d$, and for some $C > 0$, let
$(\doms, \domt)$ be a pair of source and target distributions over $\matx$ satisfying
the covariate shift assumption, with $C_\setb(\doms,\domt)\geqslant C$, and their common deterministic
labeling function $l : \matx\times\matx \rightarrow \{0, 1\}$ satisfying the $\phi$-SPL property with
respect to the target distribution, for some function $\phi$. Then, for all $\epsilon, \delta > 0$, for all
$(\doms, \domt)$, if $S$ is a source generated sample set of size at least
\[
m \geqslant \frac{4}{\epsilon\delta Ce} \left( \phi^{-1}\left(\frac{\epsilon}{4}\right)\sqrt{d} \right)^d
\]
then, with probability at least $1 - \delta$ (over the choice of $S$), $\risk_\domt(\nnc)$ is at most $\epsilon$.
\end{theorem}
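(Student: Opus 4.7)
The plan is to lift the nearest-neighbor learnability argument of \cite{BenDavid2014} from single-point classification to pairs. Three ingredients do the work: a grid partition of $[0,1]^d$, a weight-ratio-based coverage bound for the source sample, and the $\phi$-SPL inequality applied coordinate-wise to a target pair.

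Set $\lambda = \phi^{-1}(\epsilon/4)$ and partition $[0,1]^d$ into $N=\lceil\lambda\sqrt{d}\rceil^d$ axis-aligned cubes of side length $1/\lceil\lambda\sqrt{d}\rceil$; each lies in $\setb$ and has diameter at most $1/\lambda$. Call a cube $b$ \emph{covered} if $S\cap b\neq\emptyset$. For $b$ with $\domt(b)>0$, the weight-ratio assumption gives $\doms(b)\geqslant C\,\domt(b)$, so the probability that $b$ is uncovered is $(1-\doms(b))^m \leqslant e^{-mC\domt(b)}$. Summing over cubes, the expected $\domt$-mass inside uncovered cubes is at most $\sum_b \domt(b)\,e^{-mC\domt(b)}\leqslant N/(emC)$, because $q\mapsto qe^{-mCq}$ is maximized at $q=1/(mC)$ with value $1/(emC)$. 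Markov's inequality then yields, with probability at least $1-\delta$ over $S$, that this $\domt$-mass is at most $N/(emC\delta)$, which under the hypothesized sample size is at most $\epsilon/4$.

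For a random pair $(\vecx_1,\vecx_2)\sim\domt\times\domt$, $\phi$-SPL in the second argument together with its symmetric counterpart in the first (using symmetry of $l$ and of the product distribution) gives, with probability at least $1-2\phi(\lambda)=1-\epsilon/2$, that $l(\vecx_1,\vecx_2)$ is stable under replacement of either coordinate by any point within $1/\lambda$. Chaining these two single-coordinate perturbations through an intermediate pair $(\vecx_1,\vecy_2)$ upgrades this to joint stability, i.e.\ $l(\vecy_1,\vecy_2)=l(\vecx_1,\vecx_2)$ whenever $\|\vecy_i-\vecx_i\|\leqslant 1/\lambda$. Now condition on the good source event from the coverage step: any target pair whose two coordinates land in covered cubes has coordinate-wise nearest neighbors $\vecy_1^\star,\vecy_2^\star\in S$ in the same cubes, hence within distance $1/\lambda$, so $\nnc$ outputs $l(\vecy_1^\star,\vecy_2^\star)=l(\vecx_1,\vecx_2)$. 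A union bound over the two coordinate-coverage events and the SPL event finally gives $\risk_\domt(\nnc)\leqslant 2\cdot(\epsilon/4)+\epsilon/2=\epsilon$, as claimed.

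The main obstacle I anticipate is the SPL step: as stated, the property perturbs only the second argument, so obtaining joint Lipschitz stability in both coordinates of the pair requires carefully chaining two single-coordinate perturbations via the symmetry of $l$, and one must verify that a factor of two in the $\phi$ budget is all that is lost in the process. The coverage bookkeeping in step one, and the nearest-neighbor-falls-into-a-covered-cube observation at the end, are otherwise largely routine transcriptions of the single-point case.
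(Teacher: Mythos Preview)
Your proposal is correct and follows essentially the same route as the paper's proof: a grid cover of $[0,1]^d$ into boxes of side $\approx 1/(\phi^{-1}(\epsilon/4)\sqrt d)$, the weight-ratio coverage bound $\sum_b \domt(b)e^{-mC\domt(b)}\le N/(emC)$ followed by Markov, and the triangle-inequality chaining $l(\vecx_1,\vecx_2)\to l(\vecx_1,N_S(\vecx_2))\to l(N_S(\vecx_1),N_S(\vecx_2))$ to pick up two $\phi$-SPL terms. The obstacle you flag in the SPL step---that the second application of SPL involves a pair whose second coordinate is no longer $\domt$-random---is precisely the point the paper's own argument passes over tersely, so your level of rigor matches theirs.
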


\vspace{-10pt}
\section{Reinforcing the Assumptions}\label{sec:min}
\vspace{-5pt}

In previous section, we show that with some assumptions on the extracted feature space, unsupervised domain adaptation is learnable. Thus we are concerned with how to train a feature extractor, i.e. encoder, satisfying the mentioned assumptions. Briefly speaking, we first derive several loss functions according to the assumptions and then iteratively train the encoder to minimize the loss functions via a self-training framework.

\paragraph{Self-training framework.} Assume that we have an encoder $\vecx$ and some samples $\domd$ with guessed label $l$ on target domain, and the loss function is $\loss(\vecx,\domd,l)$. In self-training, at first a $\vecx^{(i)}$ is used to extract features from all available unlabeled samples, and the target now is minimizing the loss through selecting samples, that is $\min_{\domd,l} \loss(\vecx^{(i)},\domd,l)$. 
On the next round, with these selected samples, the encoder $\vecx^{(i)}$ is updated by solving the minimization problem $\min_{\vecx} \loss(\vecx,\domd^{(i)},l^{(i)})$.

It is worthwhile to note that the covariate shift assumption only depends on the property of labeling function, thus in this section we only consider the proposed SPL and weight ratio. 

\subsection{Reinforcing the SPL}
Recall that the original data is $\vecz\in\matz$ and we wish to iteratively find a encoder $\vecx(\cdot)$ such that in the feature space the SPL property is satisfied as much as possible. So we first need a definition to evaluate whether one encoder is better than another concerning the SPL property. 

\begin{definition}\label{def:intramin}
Encoder $\vecx^a(\cdot)$ is said to be more clusterable than $\vecx^b(\cdot)$ with respect to a labeling function $l$ and a distribution $\domd$ over $\matz$, if there exists $\epsilon\in(0,1)$, and $\lambda\in\{\lambda_1,\lambda_2\}$ with $\lambda_1\lambda_2<0$, such that 
\vspace{-4pt}
\begin{equation*}
\begin{aligned}
\fp_{\vecz_1,\vecz_2\sim\domd} &\big( \exists\vecz_3:|l(\vecx^a(\vecz_1),\vecx^a(\vecz_2))-l(\vecx^a(\vecz_1),\vecx^a(\vecz_3))|-\epsilon\geqslant\lambda \|\vecx^a(\vecz_2)-\vecx^a(\vecz_3)\| \big) \\
\vspace{-5pt}
&\leqslant
\fp_{\vecz_1,\vecz_2\sim\domd} \big( \exists\vecz_3:|l(\vecx^b(\vecz_1),\vecx^b(\vecz_2))-l(\vecx^b(\vecz_1),\vecx^b(\vecz_3))|-\epsilon\geqslant\lambda \|\vecx^b(\vecz_2)-\vecx^b(\vecz_3)\| \big)
\end{aligned}
\end{equation*}
\end{definition}
\vspace{-5pt}
The above equation differs from the original SPL (\ref{eqn:spl}) for the reason that the original form is too strict to be satisfied.
Now we can easily define a loss function
\begin{equation*}
\loss(\vecx,\domd,l;\epsilon,\lambda) = \fe_{\vecz_1,\vecz_2\sim\domd} \big[  \exists\vecz_3:|l(\vecx(\vecz_1),\vecx(\vecz_2))-l(\vecx(\vecz_1),\vecx(\vecz_3))|-\epsilon\geqslant\lambda \|\vecx(\vecz_2)-\vecx(\vecz_3)\|  \big],
\end{equation*}
where $\domd$ means a set of samples and $l$ is the guessed labeling function.
However, directly performing optimization on the loss function is infeasible since the analytical form is unknown.
To overcome the difficulty, we adopt intra-cluster distance and inter-cluster distance,
\vspace{-5pt}
\begin{align}
&\lossintra(\vecx,\domd,l) = \sum_{l(\vecx(\vecz_1),\vecx(\vecz_2))=1} \|\vecx(\vecz_1)-\vecx(\vecz_2)\| , \\
&\lossinter(\vecx,\domd,l) = \sum_{l(\vecx(\vecz_1),\vecx(\vecz_2))=0} -\|\vecx(\vecz_1)-\vecx(\vecz_2)\| . 
\end{align}
\vspace{-8pt}

We show that minimizing $\lossintra$ and $\lossinter$ is appropriate for being more clusterable through the following theorem.
\begin{theorem}\label{theo:intramin}
For two encoders $\vecx^a, \vecx^b$, a distribution $\domd$ and a labeling function $l$, then
\vspace{-5pt}
\[
    \text{$\vecx^a$ is more clusterable than $\vecx^b$}
    \Leftrightarrow
    \left\{
    \begin{aligned}
    \lossintra(\vecx^a,\domd,l)\leqslant\lossintra(\vecx^b,\domd,l) \\
    \lossinter(\vecx^a,\domd,l)\leqslant\lossinter(\vecx^b,\domd,l)
    \end{aligned}
    \right.
\]
\end{theorem}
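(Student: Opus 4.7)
The plan is to reduce the existential, signed inequality in Definition~\ref{def:intramin} to two much simpler geometric events, one for each sign of $\lambda$, and then match these events against $\lossinter$ and $\lossintra$ respectively. The key simplification is that $l\in\{0,1\}$, so $|l(\vecx(\vecz_1),\vecx(\vecz_2))-l(\vecx(\vecz_1),\vecx(\vecz_3))|\in\{0,1\}$, and with $\epsilon\in(0,1)$ a case split on this label difference collapses the condition $|l_{12}-l_{13}|-\epsilon\geqslant\lambda\|\vecx(\vecz_2)-\vecx(\vecz_3)\|$ into purely metric conditions on $\vecz_3$.

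For $\lambda_1>0$ the event becomes ``there exists $\vecz_3$ whose label relative to $\vecz_1$ disagrees with that of $\vecz_2$, and $\|\vecx(\vecz_2)-\vecx(\vecz_3)\|\leqslant(1-\epsilon)/\lambda_1$''. Its probability is controlled by how close different-label pairs can lie, which is precisely what the sum $\sum_{l=0}\|\vecx(\vecz_1)-\vecx(\vecz_2)\|=-\lossinter$ measures on average. A Markov-type tail bound on each candidate $\vecz_3$, combined with a union/integration over $(\vecz_1,\vecz_2)$, sandwiches this event probability between quantities monotone in $\lossinter$. For $\lambda_2<0$ the analogous collapse yields ``there exists a different-label $\vecz_3$, or a same-label $\vecz_3$ with $\|\vecx(\vecz_2)-\vecx(\vecz_3)\|\geqslant\epsilon/|\lambda_2|$''; by tuning $|\lambda_2|$ so that the distance-free disjunct contributes identically under both encoders (it depends only on the labeling signature, not on the metric), the discriminating part is a tail event on same-label distances, which is monotone in $\lossintra$.

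Combining the two correspondences closes the iff: for $(\Leftarrow)$ the loss inequalities let me pick $\lambda_1,\lambda_2,\epsilon$ so that each probability inequality follows termwise from the corresponding sum comparison; for $(\Rightarrow)$, the freedom to let $\epsilon$ and $\lambda$ range lets me invert the tail bounds and recover the distance-sum comparisons. The main obstacle, I expect, will be the $\lambda_2<0$ case, since the event mixes a purely labeling-driven disjunct with a distance-driven one, so isolating the intra-cluster contribution requires a judicious, possibly $\domd$-adapted choice of $\epsilon$ and $|\lambda_2|$. A secondary care point is that $l$ itself depends on the encoder via $\vecx(\cdot)$, so the same- and different-label subsets under $\vecx^a$ and $\vecx^b$ are not the same set of raw-data pairs; the argument must run the two probability spaces in parallel rather than trying to couple them pointwise.
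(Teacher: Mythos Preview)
Your plan matches the paper's proof: case-split on the sign of $\lambda$, collapse the event to a tail condition on inter- (for $\lambda>0$) or intra-cluster (for $\lambda<0$) distances, and for $(\Leftarrow)$ pick the threshold as a mean of the relevant loss (the paper uses $C_1$ equal to the mean of $\lossintra(\vecx^b,\domd,l)$ and $C_2$ equal to the mean of $\lossinter(\vecx^a,\domd,l)$). The sign convention is swapped---the paper takes $\lambda_1<0$, $\lambda_2>0$---but that is cosmetic.

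On your two flagged obstacles: the paper does not disentangle the ``mixed disjunct'' in the $\lambda<0$ case; for $(\Rightarrow)$ it simply restricts attention to the label configuration $l_{12}=1,\,l_{13}=1$ (and $l_{12}=1,\,l_{13}=0$ for the other sign) and reads off the distance inequality directly, so do not over-engineer that step. Your worry that $l$ depends on the encoder is unnecessary here: in the paper's setting $l$ records identity of the raw samples, hence $l(\vecx^a(\vecz_i),\vecx^a(\vecz_j))=l(\vecx^b(\vecz_i),\vecx^b(\vecz_j))$ always, and the same-/different-label index sets coincide under both encoders. One small correction: in $(\Rightarrow)$ you do not have ``freedom to let $\epsilon,\lambda$ range''---Definition~\ref{def:intramin} supplies a single fixed $\epsilon$ and pair $(\lambda_1,\lambda_2)$---but the paper's own passage from a single tail inequality to the sum comparison is equally informal, so keep that direction at the same heuristic level.
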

\vspace{-8pt}

For proof we refer reader to the supplemental materials.
Here, Definition \ref{def:intramin} and Theorem \ref{theo:intramin} describe how to evaluate an encoder with a fixed distribution $\domd$ and labeling function $l$. Obviously, we can fix the encoder and rewrite the results to evaluate the samples with guessed labels. For the sake of conciseness, the details are omitted. 
When $\domd$ and $l$ are fixed during the iteration procedure, minimizing $\lossintra$ and $\lossinter$ are straightforward.
Contrastingly, we have to focus more on the strategy of picking out samples with guessed labels. 

\vspace{-5pt}
\paragraph{Selecting samples via clustering.}
In spite of the similarity between $\lossintra$ and $\lossinter$, they do not share a same strategy regarding the sample selection step. 
For $\lossintra$, if all the data in $\domt$ are encoded with a $\vecx$, then for each pair $(\vecx_i,\vecx_j)$, it is natural to assume that a smaller $\|\vecx_i-\vecx_j\|$ implies a higher confidence that $l(\vecx_i,\vecx_j)=1$. Likewise, a larger $\|\vecx_i-\vecx_j\|$ implies a higher confidence that $l(\vecx_i,\vecx_j)=0$. But choosing a high confidence different pair as training data does not really improve the real performance, because the accuracy is more sensitive about the minimal distance of different pairs, i.e., $\inf_{i:l(\vecx_d,\vecx_i)=0}\|\vecx(\vecz_d)-\vecx(\vecz_i)\|$. So rather than directly selecting different pairs, we treat the selected samples as a series of clusters and dissimilar pairs are selected on the basis of different clusters. 
That is to say, in order to minimize $\lossintra$ and $\lossinter$ simultaneously, we perform clustering on the data with guessed labels. 

\vspace{-5pt}
\paragraph{Distance metrics and loss functions.}
Up to this point, we are facing an unsupervised clustering problem, which is largely settled by the distance metric. In other words, designing a sample selecting strategy to minimize a loss turns into designing a distance metric between samples, and a better distance should lead to a lower $\lossintra$ and $\lossinter$. It is a common practice in image retrieval that the contextual similarity \cite{Jegou2007} measure is more robust and beneficial for a lower $\lossintra$.

In our practice, we adopt the $k$-reciprocal encoding in \cite{Zhong2017Jan} as the distance metric, which is a variation of Jaccard distance between nearest neighbors sets.
Precisely, with an encoder $\vecx$, all samples from $\domt$ are encoded and with these features a distance matrix $M\in\fr^{m_t\times m_t}$ is computed where $M_{ij}=\|\vecx_i-\vecx_j\|^2$ and $m_t$ is the total number of target samples. Then $M$ is updated by 
\vspace{-5pt}
\begin{equation}
	M_{ij} = \left\{
	\begin{aligned}
		  & e^{-M_{ij}} & \text{ if $j\in\seti_i$}, \\
		  & 0           & \text{otherwise}.
	\end{aligned}
	\right.
\end{equation}
\vspace{-10pt}

where the indices set $\seti_i$ is the so called robust set for $\vecx_i$. $\seti_i$ is determined by first choosing mutual $k$ nearest neighbors for the probe, then incrementally adding elements. Specifically, denote the indices of mutual $k$ nearest neighbors of $\vecx_i$ as $\setk_k(\vecx_i)$ and then for all $s\in\setk_k(\vecx_i)$, if $|\setk_k(\vecx_i)\cap\setk_{k\over{2}}(\vecx_s)|\geqslant \frac{2}{3}|\setk_{k\over{2}}(\vecx_s)|$, let $\seti_i\leftarrow\setk_k(\vecx_i)\cup\setk_{k\over{2}}(\vecx_s)$.
In particular, for a pair $(\vecx_i,\vecx_j)$, we have
\begin{equation}
	\disj(\vecx_i,\vecx_j) = 1- \frac{\sum_{k=1}^{m_t} \min(M_{ik},M_{jk})}{\sum_{k=1}^{m_t} \max(M_{ik},M_{jk})}.
\end{equation}

\subsection{Reinforcing the weight ratio}
As mentioned before, weight ratio is a crucial part to support the learnability of domain adaptation. 
Apart from directly define a loss based on the original weight ratio definition, a similar way as the SPL case is minimizing the loss
\begin{equation}
\lossfunwr(\vecx,\domd) = \fe_{\vecz_d\sim\domd} \left[ \inf_{\vecz_s\sim\doms}\|\vecx(\vecz_d)-\vecx(\vecz_s)\| \right],
\end{equation}
where $\doms$ is the source domain. 
The intuition here is to enhance the degree of similarity, which means that each target feature is close to some source features. 
We denote $C_{\setb,\eta}(\doms,\domt;\vecx)$ as the weight ratio when using $\vecx$ as the encoder, where $\setb$ is defined in Section \ref{sec:theo}. The following theorem demonstrate that our $\lossfunwr$ makes sense and the proof is in the supplemental materials.
\begin{theorem}
For two encoders $\vecx^a, \vecx^b$, a distribution $\domd$, if $\eta$ is a random variable and its support is a subset of $\fr^+$ , then
\[
\lossfunwr(\vecx^a,\domd)\leqslant\lossfunwr(\vecx^b,\domd) \Leftrightarrow \fe\left[C_{\setb,\eta}(\doms,\domd;\vecx^a)\right]\geqslant \fe\left[C_{\setb,\eta}(\doms,\domd;\vecx^b)\right]
\]
\end{theorem}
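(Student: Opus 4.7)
The plan is to show that $\lossfunwr$ and the expected weight ratio both quantify, in dual ways, the discrepancy between the pushforward measures $\vecx_*\doms$ and $\vecx_*\domd$ on $\matx = [0,1]^d$, and then exhibit a monotonic link between them via axis-aligned rectangles. Since both quantities depend on the encoder $\vecx$ only through these pushforwards, it suffices to compare them directly as functionals of feature-space measures, and to show that this comparison survives taking expectation against the law of $\eta$.

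First I would rewrite
\[
\lossfunwr(\vecx, \domd) = \fe_{\vecy \sim \vecx_*\domd} \bigl[ \operatorname{dist}(\vecy, \operatorname{supp}(\vecx_*\doms)) \bigr],
\]
so that a small loss forces the target features to concentrate near the source support. Applying Markov's inequality to $D_\vecx(\vecz_d) = \inf_{\vecz_s \in \doms}\|\vecx(\vecz_d) - \vecx(\vecz_s)\|$ yields $\fp_{\vecz_d \sim \domd}(D_\vecx(\vecz_d) > \delta) \leq \lossfunwr(\vecx,\domd)/\delta$ for every $\delta > 0$, which is the basic mechanism that converts a bound on the loss into a bound on the target mass lying far from the source support.

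Second, for a fixed realization of $\eta$, I would establish the forward implication by contrapositive. Suppose $C_{\setb,\eta}(\doms,\domd;\vecx^a) < C_{\setb,\eta}(\doms,\domd;\vecx^b)$, witnessed by some axis-aligned $b$ with $\vecx^a_*\domd(b) \geq \eta$ and a ratio $\vecx^a_*\doms(b)/\vecx^a_*\domd(b)$ strictly smaller than the analogous quantity for $\vecx^b$. Many target features pushed into $b$ by $\vecx^a$ then have no source companion inside $b$, so their nearest source sits outside $b$ at a distance controlled by their position relative to the boundary of $b$. Integrating these contributions against $\domd$, and using that $b \subseteq [0,1]^d$ has bounded geometry so that the boundary-to-interior ratio is controlled, gives a lower bound on $\lossfunwr(\vecx^a,\domd)$ that is monotone in the gap of the ratio and strictly exceeds the analogous lower bound for $\vecx^b$. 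Integrating this pointwise (in $\eta$) inequality against the law of $\eta$ on $\fr^+$ preserves it and produces the equivalence in expectation. The converse direction comes from reading the same mechanism in reverse: a uniformly high weight ratio over every rectangle of target mass at least $\eta$, across all relevant $\eta$, forces $D_\vecx$ to be small on average and bounds $\lossfunwr$ from above.

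The main obstacle will be making the second step quantitative: converting a drop in the ratio on a single rectangle into a comparable drop in the loss requires a careful geometric integration pairing the contribution of each target feature in $b$ to $\lossfunwr$ with its distance to sources outside $b$, and one has to make sure the slack introduced by boundary effects of $b$ can be absorbed without spoiling monotonicity. The axis-aligned rectangle structure of $\setb$ is critical here because it permits an inclusion--exclusion or dyadic decomposition argument, and the randomization of $\eta$ is precisely what averages away the rectangle-dependent boundary slack so that the equivalence holds after taking expectation rather than only in a worst-case pointwise sense.
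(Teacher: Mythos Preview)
Your approach differs substantially from the paper's and, as outlined, does not close to an equivalence.

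The paper's argument is a short chain of formal reformulations: it first passes from the inequality on $\lossfunwr$ (an expectation of the nearest-source distance) to an inequality on the probability $\fp(\|\vecx(\vecz_d)-\vecx(\vecz_s)\|\leqslant\eta)$ with the random $\eta$ serving as the threshold, then rewrites this probability as a conditional statement over axis-aligned boxes $b_0\in\setb$, and finally identifies that conditional with the expected weight ratio. In the paper, the random $\eta$ is the bridge that turns an expectation comparison into a CDF-type probability comparison; it is \emph{not} used as a smoothing device for geometric boundary effects.

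Your plan has two genuine gaps. First, Markov's inequality gives only a one-sided implication (an upper bound on the expectation controls the tail), so it cannot by itself deliver an ``if and only if''; you would need the reverse inequality, which Markov does not provide. Second, and more seriously, the proposed conversion between the weight ratio and $\lossfunwr$ is not tight in either direction. A small ratio $\doms(b)/\domd(b)$ on a single rectangle $b$ says nothing about how far the source support lies from the target features inside $b$: the source mass could sit immediately outside $\partial b$, making nearest-source distances arbitrarily small while the ratio on $b$ is zero. Your proposed remedy---averaging over the law of $\eta$---does not address this, because $\eta$ in the definition of $C_{\setb,\eta}$ is a lower bound on the \emph{target mass} of the rectangle, not on its side length or any geometric scale; randomizing it changes which rectangles qualify but does nothing to control the distance from interior target points to source points across the boundary. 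The same obstruction blocks the converse you sketch: a uniformly large weight ratio over all qualifying rectangles does not bound the nearest-source distance without an additional covering argument, and any covering introduces dimension- and box-count-dependent slack that prevents the two comparisons from being equivalent.
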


However, unlike $\lossinter$ and $\lossintra$, it is hard to optimize on $\vecx$ for $\lossfunwr$ because of the infimum. On the other hand, selecting samples is easily done via giving more confidence to the sample with smaller $\inf_{\vecz_s\sim\doms}\|\vecx(\vecz_d)-\vecx(\vecz_s)\|$.
More specifically, for each $\vecx_i$ from $\domt$, we search the nearest neighbor in $\doms$. The function measuring the confidence for each $\vecx_i$ is denoted by
\begin{equation}
	\diswr(\vecx_i) = 1 - e^{-\|\vecx_i-N_\doms(\vecx_i)\|^2}.
\end{equation}
where $N_\doms(\vecx_i)$ means the nearest neighbor of $\vecx_i$ in source domain $\doms$, and a smaller $\diswr$ means a higher confidence. To transform $\diswr$ and $\disj$ onto the same scale, we perform a simple normalization on $\diswr$, i.e., divided by $\max_i \diswr(\vecx_i)$.
Combining with $\disj$, the final distance matrix is $M_{ij}=d(\vecx_i,\vecx_j)$ and
\begin{equation}\label{eq:confidence}
	d(\vecx_i,\vecx_j) = (1-\lambda) \disj(\vecx_i,\vecx_j) + \lambda (\diswr(\vecx_i)+\diswr(\vecx_j)),
\end{equation}
where $\lambda\in[0,1]$ is a balancing parameter. 

\subsection{Overall algorithm}
So far, general outlines of reinforcing the assumptions have been elaborated, except the details about the clustering method. 
In our framework, a good clustering method should possess the following properties: (a) it does not require the number of clusters as an input, because in fact a cluster means an identity and the number of identities is trivial and unknown; (b) it is able to avoid pairs of low confidence, that is allowing some points not belonging to any clusters; (c) it is scalable enough to incorporate our theoretically derived distance metric. We employ the clustering method named DBSCAN \cite{Ester}, which has stood the test of time and exactly have the mentioned advantages. 

Now we provide some other practical details of our domain adaptive re-ID algorithm.
At the beginning, an encoder $\vecx^{(0)}$ is well trained on $\doms$ and all the pairs are computed with Eqn.(\ref{eq:confidence}).
Next, we describe how we set the threshold controlling whether a pair should be used to train. Intuitively, the threshold should be irrelevant to tasks since the scale of $d$ varies from tasks. 
So in our method, we first sort all the distance from lowest to highest and the average value of top $pN$ pairs is set to be the threshold $\tau$, where  $N$ is the total number of possible pairs and $p$ is percentage. 
On these data with pseudo-labels, the encoder is then trained with triplet loss \cite{Weinberger2009}. 
Our whole framework is concluded in Algorithm \ref{algo}.

\begin{algorithm}
	\begin{footnotesize}
		\SetKwInOut{Input}{input}\SetKwInOut{Output}{output}
		\Input{source domain dataset $S$, unlabeled target domain dataset $T$ with $m_t$ samples, balancing parameter $\lambda$, percentage $p$, the minimum size of a cluster $N_1$, iteration number $N_2$}
		\Output{an encoder $\vecx$ for target domain}
		Train an encoder $\vecx^{(0)}$ on $S$\;
		Compute $T^{(0)}=\vecx^{(0)}(T), S^{(0)}=\vecx^{(0)}(S)$\;
		Compute a distance matrix $M^{(0)}$ on $T^{(0)}, S^{(0)}$ by Eqn.(\ref{eq:confidence})\;
		Sort all the $N$ elements in $M^{(0)}$ from low to high and record the mean of top $pN$ values as threshold $\tau$\;
		Select train data $D^{(0)}=\texttt{DBSCAN}(M^{(0)};\tau,N_1)$\;
		Train $\vecx^{(1)}$ on $D$\;
		\For{$i=1$ \KwTo $N_2$}{
		Compute $T^{(i)}=\vecx^{(i)}(T^{(i-1)}), S^{(i)}=\vecx^{(i)}(S^{(i-1)})$\;
		Compute $M^{(i)}$ on $T^{(i)}, S^{(i)}$\;
		Select $D^{(i)}=\texttt{DBSCAN}(M^{(i)};\tau,N_1)$\;
		Train $\vecx^{(i+1)}$ on $D^{(i)}$\;
		}
		\caption{Unsupervised Domain Adaptation for Re-ID}\label{algo}
	\end{footnotesize}
\end{algorithm}
\vspace{-5pt}

\section{Experiments}\label{sec:exp}
In this section, we test our unsupervised domain adaptation algorithm on person re-ID and vehicle re-ID. The performance are evaluated by cumulative matching characteristic (CMC) and mean Average Precision (mAP), which are multi-gallery-shot evaluation metrics defined in \cite{Zheng2015Dec}.

\paragraph{Parameter settings and implementation details.}
In all the following re-ID experiments, we empirically set $\lambda=0.1,p=1.6\times10^{-3}$, $N_1=4$ and $N_2=20$. Basically, the encoder is ResNet-50 \cite{he2016deep} pre-trained on ImageNet.
Both triplet and softmax loss are used for initializing the network on source domain, while only triplet loss is used for refining the encoder on target domain. More details about the network, training parameters and visual examples from different domains are included in the supplemental materials. Moreover, in the supplemental materials we also investigate other distance metrics and clustering methods.

\subsection{Person re-ID}
\begin{wraptable}{r}{0.5\textwidth}
	\begin{footnotesize}
		\caption{The details of datasets used in our experiments.}
		\label{ta:dataset}
		\begin{tabular}{c|cc|cc}
			\hline\hline
			\multirow{2}{*}{Datasets} & \multicolumn{2}{c|}{Training} & \multicolumn{2}{c}{Testing}   \\ \cline{2-5}
			                           & \#IDs & \#Images & \#IDs & \#Images \\ \hline
			Market \cite{Zheng2015Dec} & 751   & 12,936   & 750   & 19,732   \\
			Duke \cite{Ristani2016}    & 702   & 16,522   & 702   & 19,889   \\
			VeRi \cite{Liu7553002}     & 576   & 37,778   & 200   & 13,257   \\
			PKU \cite{liu2016deep}     & 2,290 & 24,157   & -     & -        \\
			\hline\hline
		\end{tabular}
	\end{footnotesize}
\end{wraptable}
Market-1501 \cite{Zheng2015Dec} and DukeMTMC-reID \cite{Ristani2016} are two large scale datasets and frequently used for unsupervised domain adaptation experiments. Both of the two datasets are split into a training set and a testing set. The details including the number of identities and images are shown in Table \ref{ta:dataset}.

Comparison methods are selected in three aspects. Firstly, we show the performance of direct transfer, that is directly using the initial source-trained encoder on the target domain. Also, the plain self-training scheme is compared as a baseline, which means sample selection only depends on their Euclidean distance.
Secondly, our method is compared with three most recent state-of-the-art methods\footnote{Our results also outperforms PTGAN\cite{Wei_2018_CVPR} by large margin, but the comparison with PTGAN is not shown here since we adopt a different backbone network.}: SPGAN \cite{Deng2017Nov}, TJ-AIDL \cite{Wang2018Mar} and ARN \cite{Li2018Apr}. We report the original results quoted from in their papers.
Thirdly, we show the results of our methods with and without $\diswr$, which can be viewed as ablation studies.
The results are shown in Table \ref{ta:person}, from which we can observe the following facts: (a) The accuracy of self-training baseline is high and even better than two recent methods, indicating that our clustering based self-training scheme is fairly good;
(b) The version without $\diswr$ is better than self-training baseline, which shows the effectiveness of $\disj$, and after incorporated with $\diswr$ the final method achieves the highest accuracy, reflecting the advantage of $\diswr$. Thus our two assumptions are both useful according to the ablation studies.
(c) Although the proposed $\diswr$ is beneficial, the increase of accuracy brought by it varies from different tasks. We think this is related to the distribution of source and target domains. Please refer to for more discussion in \ref{para:diswr} on this problem.

Furthermore, we draw the mAP curves (Figure \ref{fig:conver}) during the iterations of the adaptation task Duke$\rightarrow$Market, in which self-training baseline, using distance without $\diswr$ and $\lambda=\{0.05,0.1,0.5,0.7\}$ are compared. We can see that except the baseline, all the curves have a similar tendency toward convergence. A subtle distinction is that after 18 iterations methods with smaller $\lambda$ become unstable, while methods with larger $\lambda$ move toward convergence.

\begin{table}[]
	\centering
	\caption{Comparison of unsupervised domain adaptive person re-ID methods.}
	\label{ta:person}
	\footnotesize
	\begin{tabular}{c|cccc|cccc}
		\hline\hline
		\multirow{2}{*}{Methods} & \multicolumn{4}{c|}{DukeMTMC-reID$\rightarrow$Market-1501} & \multicolumn{4}{c}{Market-1501$\rightarrow$DukeMTMC-reID}   \\ \cline{2-9}
		                           & rank-1    & rank-5    & rank-10   & mAP       & rank-1    & rank-5    & rank-10   & mAP       \\ \hline\hline
		Direct Transfer            & 46.8      & 64.6      & 71.5      & 19.1      & 27.3      & 41.2      & 47.1      & 11.9      \\
		Self-training Baseline     & 66.7      & 80.0      & 85.0      & 39.6      & 40.8      & 53.9      & 60.5      & 24.7      \\
		\hline
		SPGAN \cite{Deng2017Nov}   & 57.7      & 75.8      & 82.4      & 26.7      & 46.4      & 62.3      & 68.0      & 26.2      \\
		TJ-AIDL \cite{Wang2018Mar} & 58.2      & 74.8      & 81.1      & 26.5      & 44.3      & 59.6      & 65.0      & 23.0      \\
		ARN \cite{Li2018Apr}       & 70.3      & 80.4      & 86.3      & 39.4      & 60.2      & 73.9      & 79.5      & 33.4      \\
		\hline
		\emph{Ours} w/o $\diswr$   & 75.1      & 88.7      & 92.4      & 52.5      & 68.1      & \bf{80.1} & 83.2      & \bf{49.0} \\
		\emph{Ours}                & \bf{75.8} & \bf{89.5} & \bf{93.2} & \bf{53.7} & \bf{68.4} & \bf{80.1} & \bf{83.5} & \bf{49.0} \\
		\hline\hline
	\end{tabular}
\end{table}

\subsection{Vehicle re-ID}

We use VeRi-776 \cite{Liu7553002} and part of PKU-VehicleID \cite{liu2016deep} for vehicle re-ID experiments\footnote{In PKU-VehicleID, the camera information is not provided but needed when computing the CMC and mAP, so we only test with the setting that PKU-VehicleID as source dataset and VeRi-776 as target dataset.}, the details are included in Table \ref{ta:dataset}.
Unlike person re-ID, currently there are no unsupervised domain adaptation algorithms designed for vehicle re-ID. Thus, we use the existing solutions for person re-ID as comparisons\footnote{We only test SPGAN. Because (1) source code of ARN is not available; (2) TJ-AIDL requires attribute labels as an input, which is not available in vehicle re-ID datasets. For SPGAN, the experiments are carried out with their default parameters for person re-ID.}.
As shown in Table \ref{ta:vehicle}, not only are the conclusions from person re-ID verified again, but also the generalization ability of our method is shown.
We discover that the compared SPGAN generates quite presentable images and we put the images into supplemental materials, but their accuracy is still lower than the self-training baseline, not to mention our proposed method.

\begin{figure}
	\begin{minipage}[t]{0.54\textwidth}
		\small
		\centering
		\vspace{0pt}
		\captionof{table}{Comparison of unsupervised domain adaptive vehicle re-ID methods.}
		\label{ta:vehicle}
		\begin{tabular}{c|cccc}
			\hline\hline
			\multirow{2}{*}{Methods} & \multicolumn{4}{c}{PKU-VehicleID$\rightarrow$VeRi-776}  \\ \cline{2-5}
			                         & rank-1    & rank-5    & rank-10   & mAP       \\ \hline\hline
			Direct Transfer          & 52.1      & 65.1      & 71.1      & 14.6      \\
			\begin{tabular}{@{}c@{}}Self-training \\ Baseline\end{tabular}
			                         & 74.4      & 81.6      & 84.6      & 33.5      \\
			\hline
			SPGAN \cite{Deng2017Nov} & 57.4      & 70.0      & 75.6      & 16.4      \\
			\hline
			\emph{Ours} w/o $\diswr$ & 76.7      & 85.5      & \bf{89.3} & 35.3      \\
			\emph{Ours}              & \bf{76.9} & \bf{85.8} & 89.0      & \bf{35.8} \\
			\hline\hline
		\end{tabular}
	\end{minipage}
	\hfill
	\begin{minipage}[t]{0.4\textwidth}
		\centering
		\vspace{-10pt}
		\includegraphics[width=\textwidth]{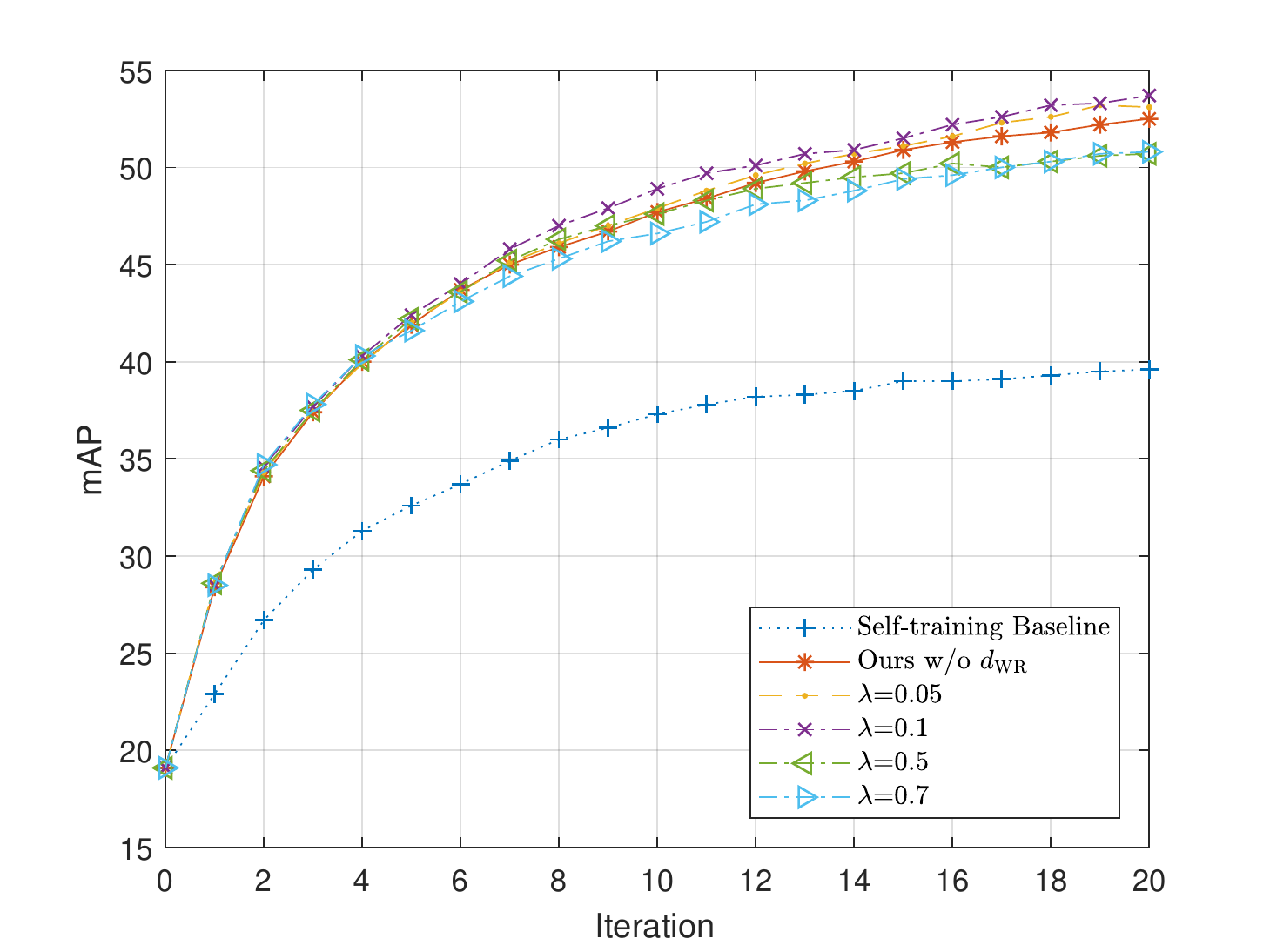}
		\caption{Convergence comparison}
		\label{fig:conver}
	\end{minipage}
	\vspace{-15pt}
\end{figure}

\section{Conclusion and Future Work}
In this work, we bridge the gap between theories of unsupervised domain adaptation and re-id tasks. Inspired by previous work \cite{BenDavid2014}, we make assumptions on the extracted feature space and then show the learnability of unsupervised domain adaptive re-id tasks. Treating the assumptions as the goal of our encoder, several loss functions are proposed and then minimized via self-training framework. 

Though the proposed solution is effective and outperforms state-of-the-art methods, there are still problems unsolved in our algorithm. Firstly, with regard of the weight ratio assumption, we propose the loss function $\lossfunwr$, which is ignored when updating the encoder because of the intractable infimum. So designing another feasible loss function is an interesting direction of research. Another promising issue is to improve the data selecting step in the self-training scheme. We turn the data selecting step into a clustering problem, which can be thought of as a version with hard threshold. This suggest that there may be a better strategy which utilize the relative values between distances. 
We hope that our analyses could open the door to develop new domain adaptive re-ID tasks and can lift the burden of designing large and complicate networks.

\begin{small}
\bibliographystyle{plain}
\bibliography{da}

\begin{thebibliography}{10}

\bibitem{bacchiani2003unsupervised}
Michiel Bacchiani and Brian Roark.
\newblock Unsupervised language model adaptation.
\newblock In {\em Acoustics, Speech, and Signal Processing, 2003.
  Proceedings.(ICASSP'03). 2003 IEEE International Conference on}, volume~1,
  pages I--I. IEEE, 2003.

\bibitem{BenDavid2010}
Shai Ben-David, John Blitzer, Koby Crammer, Alex Kulesza, Fernando Pereira, and
  Jennifer~Wortman Vaughan.
\newblock A theory of learning from different domains.
\newblock {\em Machine Learning}, 79(1):151--175, May 2010.

\bibitem{BenDavid2014}
Shai Ben-David and Ruth Urner.
\newblock Domain adaptation--can quantity compensate for quality?
\newblock {\em Annals of Mathematics and Artificial Intelligence},
  70(3):185--202, Mar 2014.

\bibitem{chen2009extracting}
Bo~Chen, Wai Lam, Ivor Tsang, and Tak-Lam Wong.
\newblock Extracting discriminative concepts for domain adaptation in text
  mining.
\newblock In {\em Proceedings of the 15th ACM SIGKDD international conference
  on Knowledge discovery and data mining}, pages 179--188. ACM, 2009.

\bibitem{coifman1998vehicle}
Benjamin Coifman.
\newblock Vehicle re-identification and travel time measurement in real-time on
  freeways using existing loop detector infrastructure.
\newblock {\em Transportation Research Record: Journal of the Transportation
  Research Board}, (1643):181--191, 1998.

\bibitem{Deng2017Nov}
Weijian Deng, Liang Zheng, Qixiang Ye, Guoliang Kang, Yi~Yang, and Jianbin
  Jiao.
\newblock Image-image domain adaptation with preserved self-similarity and
  domain-dissimilarity for person re-identification.
\newblock In {\em The IEEE Conference on Computer Vision and Pattern
  Recognition (CVPR)}, June 2018.

\bibitem{Ester}
Martin Ester, Hans-Peter Kriegel, J\"{o}rg Sander, and Xiaowei Xu.
\newblock A density-based algorithm for discovering clusters a density-based
  algorithm for discovering clusters in large spatial databases with noise.
\newblock In {\em Proceedings of the Second International Conference on
  Knowledge Discovery and Data Mining}, KDD'96, pages 226--231. AAAI Press,
  1996.

\bibitem{Fan2017May}
Hehe Fan, Liang Zheng, and Yi~Yang.
\newblock {Unsupervised Person Re-identification: Clustering and Fine-tuning}.
\newblock {\em arXiv}, May 2017.

\bibitem{frey2007clustering}
Brendan~J Frey and Delbert Dueck.
\newblock Clustering by passing messages between data points.
\newblock {\em science}, 315(5814):972--976, 2007.

\bibitem{ganin2016domain}
Yaroslav Ganin, Evgeniya Ustinova, Hana Ajakan, Pascal Germain, Hugo
  Larochelle, Fran{\c{c}}ois Laviolette, Mario Marchand, and Victor Lempitsky.
\newblock Domain-adversarial training of neural networks.
\newblock {\em The Journal of Machine Learning Research}, 17(1):2096--2030,
  2016.

\bibitem{he2016deep}
Kaiming He, Xiangyu Zhang, Shaoqing Ren, and Jian Sun.
\newblock Deep residual learning for image recognition.
\newblock In {\em The IEEE Conference on Computer Vision and Pattern
  Recognition (CVPR)}, pages 770--778, 2016.

\bibitem{Jegou2007}
H.~Jegou, H.~Harzallah, and C.~Schmid.
\newblock A contextual dissimilarity measure for accurate and efficient image
  search.
\newblock In {\em The IEEE Conference on Computer Vision and Pattern
  Recognition (CVPR)}, pages 1--8, June 2007.

\bibitem{Kingma2014Dec}
Diederik~P. Kingma and Jimmy Ba.
\newblock {Adam: A Method for Stochastic Optimization}.
\newblock {\em arXiv}, Dec 2014.

\bibitem{Li2018Apr}
Yu-Jhe Li, Fu-En Yang, Yen-Cheng Liu, Yu-Ying Yeh, Xiaofei Du, and Yu-Chiang
  Frank~Wang.
\newblock Adaptation and re-identification network: An unsupervised deep
  transfer learning approach to person re-identification.
\newblock In {\em The IEEE Conference on Computer Vision and Pattern
  Recognition (CVPR) Workshops}, June 2018.

\bibitem{liu2016deep}
Hongye Liu, Yonghong Tian, Yaowei Wang, Lu~Pang, and Tiejun Huang.
\newblock Deep relative distance learning: Tell the difference between similar
  vehicles.
\newblock In {\em The IEEE Conference on Computer Vision and Pattern
  Recognition (CVPR)}, pages 2167--2175, 2016.

\bibitem{Liu7553002}
X.~Liu, W.~Liu, H.~Ma, and H.~Fu.
\newblock Large-scale vehicle re-identification in urban surveillance videos.
\newblock In {\em 2016 IEEE International Conference on Multimedia and Expo
  (ICME)}, pages 1--6, July 2016.

\bibitem{Mansour2009Feb}
Yishay Mansour, Mehryar Mohri, and Afshin Rostamizadeh.
\newblock {Domain Adaptation: Learning Bounds and Algorithms}.
\newblock {\em arXiv}, Feb 2009.

\bibitem{margolis2011literature}
Anna Margolis.
\newblock A literature review of domain adaptation with unlabeled data.
\newblock {\em Technical report}, pages 1--42, 2011.

\bibitem{mcclosky2006effective}
David McClosky, Eugene Charniak, and Mark Johnson.
\newblock Effective self-training for parsing.
\newblock In {\em Proceedings of the main conference on human language
  technology conference of the North American Chapter of the Association of
  Computational Linguistics}, pages 152--159. Association for Computational
  Linguistics, 2006.

\bibitem{mcclosky2006reranking}
David McClosky, Eugene Charniak, and Mark Johnson.
\newblock Reranking and self-training for parser adaptation.
\newblock In {\em Proceedings of the 21st International Conference on
  Computational Linguistics and the 44th annual meeting of the Association for
  Computational Linguistics}, pages 337--344. Association for Computational
  Linguistics, 2006.

\bibitem{nigam2006semi}
Kamal Nigam, Andrew McCallum, and Tom Mitchell.
\newblock Semi-supervised text classification using em.
\newblock {\em Semi-Supervised Learning}, pages 33--56, 2006.

\bibitem{pan2008transfer}
Sinno~Jialin Pan, James~T Kwok, and Qiang Yang.
\newblock Transfer learning via dimensionality reduction.
\newblock In {\em AAAI}, volume~8, pages 677--682, 2008.

\bibitem{peng2016unsupervised}
Peixi Peng, Tao Xiang, Yaowei Wang, Massimiliano Pontil, Shaogang Gong, Tiejun
  Huang, and Yonghong Tian.
\newblock Unsupervised cross-dataset transfer learning for person
  re-identification.
\newblock In {\em The IEEE Conference on Computer Vision and Pattern
  Recognition (CVPR)}, pages 1306--1315, 2016.

\bibitem{plantinga1961things}
Alvin Plantinga.
\newblock Things and persons.
\newblock {\em The Review of Metaphysics}, pages 493--519, 1961.

\bibitem{Ristani2016}
Ergys Ristani, Francesco Solera, Roger Zou, Rita Cucchiara, and Carlo Tomasi.
\newblock Performance measures and a data set for multi-target, multi-camera
  tracking.
\newblock In {\em European Conference on Computer Vision workshop on
  Benchmarking Multi-Target Tracking}, 2016.

\bibitem{satpal2007domain}
Sandeepkumar Satpal and Sunita Sarawagi.
\newblock Domain adaptation of conditional probability models via feature
  subsetting.
\newblock In {\em European Conference on Principles of Data Mining and
  Knowledge Discovery}, pages 224--235. Springer, 2007.

\bibitem{tan2009adapting}
Songbo Tan, Xueqi Cheng, Yuefen Wang, and Hongbo Xu.
\newblock Adapting naive bayes to domain adaptation for sentiment analysis.
\newblock In {\em European Conference on Information Retrieval}, pages
  337--349. Springer, 2009.

\bibitem{Urner2011}
Ruth Urner, Shai Ben-David, and Shai Shalev-Shwartz.
\newblock Access to unlabeled data can speed up prediction time.
\newblock In {\em Proceedings of the 28th International Conference on
  International Conference on Machine Learning}, ICML'11, pages 641--648, USA,
  2011. Omnipress.

\bibitem{Vo2018Mar}
Nam Vo and James Hays.
\newblock {Generalization in Metric Learning: Should the Embedding Layer be the
  Embedding Layer?}
\newblock {\em arXiv}, Mar 2018.

\bibitem{Wang2018Mar}
Jingya Wang, Xiatian Zhu, Shaogang Gong, and Wei Li.
\newblock Transferable joint attribute-identity deep learning for unsupervised
  person re-identification.
\newblock In {\em The IEEE Conference on Computer Vision and Pattern
  Recognition (CVPR)}, June 2018.

\bibitem{Wei_2018_CVPR}
Longhui Wei, Shiliang Zhang, Wen Gao, and Qi~Tian.
\newblock Person transfer gan to bridge domain gap for person
  re-identification.
\newblock In {\em The IEEE Conference on Computer Vision and Pattern
  Recognition (CVPR)}, June 2018.

\bibitem{Weinberger2009}
Kilian~Q. Weinberger and Lawrence~K. Saul.
\newblock {Distance Metric Learning for Large Margin Nearest Neighbor
  Classification}.
\newblock {\em Journal of Machine Learning Research}, 10(Feb):207--244, 2009.

\bibitem{Zheng2015Dec}
L.~Zheng, L.~Shen, L.~Tian, S.~Wang, J.~Wang, and Q.~Tian.
\newblock {Scalable Person Re-identification: A Benchmark}.
\newblock {\em 2015 IEEE International Conference on Computer Vision (ICCV)},
  pages 1116--1124, Dec 2015.

\bibitem{Zheng2016Oct}
Liang Zheng, Yi~Yang, and Alexander~G. Hauptmann.
\newblock {Person Re-identification: Past, Present and Future}.
\newblock {\em arXiv}, Oct 2016.

\bibitem{Zhong2017Jan}
Zhun Zhong, Liang Zheng, Donglin Cao, and Shaozi Li.
\newblock Re-ranking person re-identification with k-reciprocal encoding.
\newblock In {\em The IEEE Conference on Computer Vision and Pattern
  Recognition (CVPR)}, pages 3652--3661. IEEE, 2017.

\end{thebibliography}
\end{small}
\newpage 

\appendix
\addcontentsline{toc}{section}{Supplementary Materia}
\section*{Supplementary Materia}
\section{Theorems and Proofs}
To prove Theorem 1, we first give a lemma on the upper error bound of $\risk_\domt(\nnc)$. 
Let $\setb$ denote the set of axis aligned rectangles in $[0, 1]^d$ and, given some $\eta > 0$, let $\setb_\eta$ denotes the class of axis aligned rectangles with side-length $\eta$. For a sample set $S$ from source domain, we have
\begin{lemma}\label{theo:bound}
	Let the domain be the unit cube, $\matx = [0, 1]^d$, and for some $C > 0$ and
	some $\eta \geqslant 0$, let $(\doms, \domt)$ be source and target distributions over
	$\matx$ satisfying the covariate shift assumption, with $C_{\setb,\eta} (\doms, \domt) \geqslant C$, and their
	common re-id labeling function $l : \matx\times\matx \rightarrow \{0, 1\}$ satisfying the $\phi$-SPL property with respect to the target distribution, for some function $\phi$. Then, for all $m$, and all $(\doms, \domt)$,
	\begin{equation}
		\fe_{S\sim\doms^m} [\risk_\domt(\nnc)] \leqslant 2\phi(\frac{1}{\eta\sqrt{d}}) + \frac{2}{\eta^dCme}
	\end{equation}
\end{lemma}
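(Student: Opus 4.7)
I would adapt the standard Probabilistic-Lipschitzness / weight-ratio nearest-neighbour argument to the pairwise re-ID setting, picking up a factor of two in each term that reflects the two coordinates of a query pair. Partition $\matx = [0,1]^d$ into $N = \eta^{-d}$ axis-aligned sub-cubes of side $\eta$ (the elements of $\setb_\eta$) and set $\lambda = 1/(\eta\sqrt{d})$; within any such sub-cube the diameter is at most $\eta\sqrt{d}$, so $\lambda \cdot \eta\sqrt{d} = 1$, and since $l \in \{0,1\}$ any strict bound $|l - l| < 1$ forces equality. This is precisely what turns the SPL inequality into a correctness guarantee for the NN classifier $\nnc(\vecx_1,\vecx_2) = l(\pi(\vecx_1), \pi(\vecx_2))$, where $\pi(\vecx)$ denotes the nearest point to $\vecx$ in the source sample $S$.

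For a random test pair $(\vecx_1,\vecx_2) \sim \domt^2$ and $S \sim \doms^m$, I would upper-bound the error indicator by the union of four bad events: (i) SPL fails for $(\vecx_1, \vecx_2)$ in the second argument; (ii) SPL fails for it in the first argument (same $\phi(\lambda)$ bound, by symmetry of $l$ and of $\domt^2$); (iii) the sub-cube of $\vecx_1$ is missed by $S$; (iv) the sub-cube of $\vecx_2$ is missed by $S$. Outside of all four, apply the label-triangle inequality
\[
|l(\vecx_1,\vecx_2) - l(\pi(\vecx_1),\pi(\vecx_2))| \leq |l(\vecx_1,\vecx_2) - l(\vecx_1,\pi(\vecx_2))| + |l(\vecx_1,\pi(\vecx_2)) - l(\pi(\vecx_1),\pi(\vecx_2))|
\]
and invoke SPL on each summand; the sub-cube diameter together with $\{0,1\}$-valuedness of $l$ converts each ``$<1$'' bound into exact label equality, yielding $\nnc(\vecx_1,\vecx_2) = l(\vecx_1,\vecx_2)$.

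Taking expectations over $S$ and $(\vecx_1,\vecx_2)$, events (i) and (ii) each contribute at most $\phi(\lambda) = \phi(1/(\eta\sqrt{d}))$ directly from the $\phi$-SPL definition. For (iii)-(iv), I would expand the probability that the sub-cube of a random target point is empty of source samples as
\[
\sum_{b \in \setb_\eta} \domt(b)(1 - \doms(b))^m \leq \sum_{b \in \setb_\eta} \domt(b)\, e^{-m \doms(b)},
\]
then apply the weight ratio $\doms(b) \geq C\,\domt(b)$ together with the one-variable optimisation $\max_{x \geq 0} x e^{-\alpha x} = 1/(\alpha e)$ at $x = 1/(mC)$, bounding each summand by $1/(Cme)$. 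Summing over the $N = \eta^{-d}$ sub-cubes gives $1/(\eta^d C m e)$ per coordinate, so (iii)+(iv) contribute $2/(\eta^d C m e)$, and the union bound delivers the claimed $2\phi(1/(\eta\sqrt{d})) + 2/(\eta^d C m e)$.

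The main obstacle is making the second summand in the label-triangle chain rigorous: it asks for label stability at the pair $(\vecx_1, \pi(\vecx_2))$ whose second coordinate is a source sample rather than an independent target draw, so the $\phi$-SPL bound does not apply to it directly. I would route around this by applying SPL only to the i.i.d.\ target pair $(\vecx_1, \vecx_2)$ itself, once in each argument (with $\vecy = \pi(\vecx_2)$ and $\vecy = \pi(\vecx_1)$, respectively), and then close the chain back to $l(\pi(\vecx_1), \pi(\vecx_2))$ using symmetry of $l$ together with the collapse of strict fractional inequalities into equalities that $\{0,1\}$-valuedness provides. Verifying that this closure does not cost an additional $\phi(\lambda)$ term — so that the overall SPL budget remains $2\phi(\lambda)$ rather than $3\phi(\lambda)$ — is the most delicate step of the proof.
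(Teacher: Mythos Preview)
Your plan matches the paper's proof essentially step for step: the same grid partition into side-$\eta$ sub-cubes, the same two-part decomposition into ``some coordinate's cell is empty of source points'' versus ``both nearest neighbours are $\eta\sqrt{d}$-close but the label flips'', the same label-triangle inequality
\[
|l(\vecx_1,\vecx_2)-l(N_S(\vecx_1),N_S(\vecx_2))| \le |l(\vecx_1,\vecx_2)-l(\vecx_1,N_S(\vecx_2))| + |l(\vecx_1,N_S(\vecx_2))-l(N_S(\vecx_1),N_S(\vecx_2))|,
\]
and the same weight-ratio/missing-mass bound $\sum_i \domt(C_i)\mathbf{1}[S\cap C_i=\varnothing] \le 1/(\eta^d Cme)$ (citing Lemma~7/Theorem~8 of \cite{BenDavid2014}) doubled via the union bound over the two coordinates. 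On the point you flag as delicate --- that the second summand involves the pair $(\vecx_1, N_S(\vecx_2))$ rather than an i.i.d.\ target pair --- the paper does \emph{not} route around it: it simply applies $\phi$-SPL to each summand and writes $\le 2\phi(1/(\eta\sqrt{d}))$, so your scrupulousness here already exceeds that of the paper's own proof.
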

\begin{proof}
	A test pair $(\vecx_1,\vecx_2)$ gets the wrong label under two conditions: (a) at least one test data do not have a close neighbor with all the $m$ training data; (b) $(\vecx_1,\vecx_2)$ have a close neighbor pair which have the opposite label. For (a), we can use the results from Lemma 7 and  Theorem 8 in \cite{BenDavid2014}. Specifically, let $C_1,C_2, \cdots ,C_1/\eta^d$ be a cover of the set $[0, 1]^d$ using boxes of side-length $\eta$. We have
	\begin{equation}
		\fe_{S\sim\doms^m} \left[ \sum_{i:S\cap C_i=\varnothing} \domt(C_i) \right] \leqslant \frac{1}{\eta^dCme}.
	\end{equation}
	If $\vecx$ is in the box $C_{\vecx}$, then the probability of (a) can be expressed as $\fp(C_{\vecx_1}\cap S=\varnothing\lor C_{\vecx_2}\cap S=\varnothing)$. Observing that
	\[
		\fp(C_{\vecx_1}\cap S=\varnothing\lor C_{\vecx_2}\cap S=\varnothing) \leqslant \fp(C_{\vecx_1}\cap S=\varnothing) + \fp(C_{\vecx_2}\cap S=\varnothing)
	\]
	and $\fp(C_{\vecx}\cap S=\varnothing) = \sum_{i:S\cap C_i=\varnothing} \fp(C_i)$, so (a) is bounded by $\frac{2}{\eta^dCme}$.
	For (b), we denote the nearest neighbor to $\vecx$ in $S$ is $N_S(\vecx)$ and then (b) means in the box we have
	\begin{equation}
		l(\vecx_1,\vecx_2)\neq l(N_S(\vecx_1),N_S(\vecx_2)) \wedge \|N_S(\vecx_1)-\vecx_1\|\leqslant\eta\sqrt{d} \wedge \|N_S(\vecx_2)-\vecx_2\|\leqslant\eta\sqrt{d}.
	\end{equation}
	Seeing that
	\begin{align*}
		|l(\vecx_1,\vecx_2) - l & (N_S(\vecx_1),N_S(\vecx_2))|                                                                                    \\
		                        & = |l(\vecx_1,\vecx_2) -l(\vecx_1,N_S(\vecx_2))+l(\vecx_1,N_S(\vecx_2)) - l(N_S(\vecx_1),N_S(\vecx_2))|          \\
		                        & \leqslant|l(\vecx_1,\vecx_2) -l(\vecx_1,N_S(\vecx_2))|+|l(\vecx_1,N_S(\vecx_2)) - l(N_S(\vecx_1),N_S(\vecx_2))| \\
	\end{align*}
	So
	\begin{align*}
		\fp \Big(l(\vecx_1,\vecx_2)\neq & l(N_S(\vecx_1),N_S(\vecx_2)) \wedge \|N_S(\vecx_1)-\vecx_1\|\leqslant\eta\sqrt{d} \wedge \|N_S(\vecx_2)-\vecx_2\|\leqslant\eta\sqrt{d} \Big) \\
		                                & \leqslant \fp\left(|l(\vecx_1,\vecx_2) -l(\vecx_1,N_S(\vecx_2))|\geqslant \frac{1}{\eta\sqrt{d}}\|N_S(\vecx_2)-\vecx_2\|\right)              \\
		                                & \qquad+\fp\left(|l(\vecx_1,N_S(\vecx_2)) - l(N_S(\vecx_1),N_S(\vecx_2))|\geqslant \frac{1}{\eta\sqrt{d}}\|N_S(\vecx_1)-\vecx_1\|\right)      \\
		                                & \leqslant 2\phi(\frac{1}{\eta\sqrt{d}})
	\end{align*}
	Combining the two bounds together, we conclude our proof.
\end{proof}

If we have a stronger weight ratio assumption, i.e. $C_\setb(\doms,\domt)\geqslant C$, we get the following result of domain adaptation learnability.

\begin{theorem}
	Let the domain be the unit cube,$\matx = [0, 1]^d$, and for some $C > 0$, let
	$(\doms, \domt)$ be a pair of source and target distributions over $\matx$ satisfying
	the covariate shift assumption, with $C_\setb(\doms,\domt)\geqslant C$, and their common deterministic
	labeling function $l : \matx\times\matx \rightarrow \{0, 1\}$ satisfying the $\phi$-SPL property with
	respect to the target distribution, for some function $\phi$. Then, for all $\epsilon, \delta > 0$, for all
	$(\doms, \domt)$, if $S$ is a source generated sample set of size at least
	\[
		m \geqslant \frac{4}{\epsilon\delta Ce} \left( \phi^{-1}\left(\frac{\epsilon}{4}\right)\sqrt{d} \right)^d
	\]
	then, with probability at least $1 - \delta$ (over the choice of $S$), the target error of the Nearest
	Neighbor classifier is at most $\epsilon$.
\end{theorem}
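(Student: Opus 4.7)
My plan is to deduce the theorem from Lemma~\ref{theo:bound} via Markov's inequality, after a judicious choice of the box side-length $\eta$. The first step is the observation that the stronger weight-ratio hypothesis $C_\setb(\doms,\domt)\geqslant C$ automatically implies $C_{\setb,\eta}(\doms,\domt)\geqslant C$ for every $\eta>0$: the latter infimum is taken over a subfamily of $\{b\in\setb:\domt(b)\neq 0\}$, so it can only be larger. Hence I am free to apply Lemma~\ref{theo:bound} with any $\eta$ I wish, obtaining $\fe_{S\sim\doms^m}[\risk_\domt(\nnc)]\leqslant 2\phi(1/(\eta\sqrt{d}))+2/(\eta^{d}Cme)$.

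Next I would calibrate $\eta$ so that the SPL contribution is a prescribed fraction of $\epsilon$. Setting $\eta = 1/(\phi^{-1}(\epsilon/4)\sqrt{d})$ makes $1/(\eta\sqrt{d})=\phi^{-1}(\epsilon/4)$, hence $2\phi(1/(\eta\sqrt{d}))=\epsilon/2$. Plugging this $\eta$ back, the covering term becomes $2(\phi^{-1}(\epsilon/4)\sqrt{d})^{d}/(Cme)$, and the stated sample-size lower bound $m\geqslant \frac{4}{\epsilon\delta Ce}(\phi^{-1}(\epsilon/4)\sqrt{d})^{d}$ drives this to at most $\epsilon\delta/2$. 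Therefore, under the assumed sample size, $\fe_S[\risk_\domt(\nnc)] \leqslant \epsilon/2 + \epsilon\delta/2$.

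Finally, since $\risk_\domt(\nnc)$ is a nonnegative random variable depending only on the source sample $S$, Markov's inequality yields $\fp_S[\risk_\domt(\nnc)\geqslant \epsilon]\leqslant \fe_S[\risk_\domt(\nnc)]/\epsilon$, and the complement is the claimed high-probability guarantee.

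The principal obstacle I anticipate is the constant balancing in the Markov step: for the conversion from expectation to tail to produce a failure probability of exactly $\delta$, the expected risk must be at most $\epsilon\delta$ rather than merely $\epsilon$, which in turn requires calibrating the argument of $\phi^{-1}$ to $\epsilon\delta/4$ rather than $\epsilon/4$. This is a routine re-parametrization of the balancing step and does not affect the structural argument; everything else, including the covering-plus-SPL decomposition supplied by Lemma~\ref{theo:bound}, carries through unchanged.
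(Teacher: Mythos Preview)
Your overall strategy matches the paper's---choose $\eta$ so that the SPL term equals $\epsilon/2$, then use Markov on the covering term---but the way you execute the Markov step has a real gap that your proposed ``re-parametrization'' does not fix.

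The issue is that you apply Markov's inequality to the \emph{total} expected risk $\fe_S[\risk_\domt(\nnc)]\leqslant \epsilon/2+\epsilon\delta/2$. This yields only $\fp_S[\risk_\domt(\nnc)\geqslant\epsilon]\leqslant 1/2+\delta/2$, not $\delta$. Your suggested remedy of replacing $\phi^{-1}(\epsilon/4)$ by $\phi^{-1}(\epsilon\delta/4)$ would make the expectation at most $\epsilon\delta$ and then Markov gives $\delta$, but that proves a \emph{different} sample-complexity bound than the one in the theorem statement: the theorem has $\phi^{-1}(\epsilon/4)$, not $\phi^{-1}(\epsilon\delta/4)$. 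So this is not a harmless constant rebalancing; it weakens the conclusion.

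What the paper does---and what you need---is to go back inside the proof of Lemma~\ref{theo:bound} rather than use it as a black box. There the risk for a \emph{fixed} sample $S$ decomposes as
\[
\risk_\domt(\nnc)\ \leqslant\ 2\sum_{i:S\cap C_i=\varnothing}\domt(C_i)\ +\ 2\phi\Big(\tfrac{1}{\eta\sqrt{d}}\Big),
\]
where the second term is a deterministic (almost-sure) bound that does not depend on $S$. Setting $\eta=1/(\phi^{-1}(\epsilon/4)\sqrt{d})$ makes that term exactly $\epsilon/2$ for every $S$. Markov is then applied \emph{only} to the random first term: since its expectation is at most $2/(\eta^d Cme)$, one gets $\fp_S\big[2\sum_{i:S\cap C_i=\varnothing}\domt(C_i)\geqslant\epsilon/2\big]\leqslant 4/(\epsilon\eta^d Cme)$, and requiring this to be at most $\delta$ gives precisely the stated $m$. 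On that event the total risk is at most $\epsilon/2+\epsilon/2=\epsilon$. The key point you missed is that the SPL contribution is a sure bound, so it should be subtracted before Markov rather than lumped into the expectation.
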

\begin{proof}
	From the proof in Theorem \ref{theo:bound}, the error was bounded under two circumstances. As for (a), we apply Markov's inequality and get
	\begin{equation}
		\fe_{S\sim\doms^m} \left[ 2 \sum_{i:S\cap C_i=\varnothing} \domt(C_i) \geqslant \frac{\epsilon}{2} \right] \leqslant \frac{4}{\epsilon\eta^dCme}
	\end{equation}
	Then for (b), we just set $2\phi(\frac{1}{\eta\sqrt{d}})=\frac{\epsilon}{2}$, so $\eta=\frac{\sqrt{d}}{\phi^{-1}(\epsilon/4)}$. Finally, setting the probability to be smaller than $\delta$ yields that if
	\[
		m \geqslant \frac{4}{\epsilon\delta Ce} \left( \phi^{-1}\left(\frac{\epsilon}{4}\right)\sqrt{d} \right)^d
	\]
	then with probability at least $1 - \delta$, the target error of the Nearest Neighbor classifier is at most $\epsilon$.
\end{proof}

\begin{theorem}
	For two encoders $\vecx^a, \vecx^b$, a distribution $\domd$ and a labeling function $l$, then
	\[
		\text{$\vecx^a$ is more clusterable than $\vecx^b$}
		\Leftrightarrow
		\left\{
		\begin{aligned}
			\lossintra(\vecx^a,\domd,l)\leqslant\lossintra(\vecx^b,\domd,l) \\
			\lossinter(\vecx^a,\domd,l)\leqslant\lossinter(\vecx^b,\domd,l)
		\end{aligned}
		\right.
	\]
\end{theorem}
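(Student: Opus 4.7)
My plan is to case-split on the sign of $\lambda$ in Definition 2 and exploit the fact that $l$ is $\{0,1\}$-valued, so that $|l(\vecx_1,\vecx_2)-l(\vecx_1,\vecx_3)|\in\{0,1\}$. For $\lambda>0$, the inequality $|l-l|-\epsilon\geq\lambda\|\vecx_2-\vecx_3\|$ can only be satisfied when $|l-l|=1$ and then reduces to $\|\vecx_2-\vecx_3\|\leq(1-\epsilon)/\lambda$. Hence the inner event becomes ``there exists $\vecz_3$ whose label relative to $\vecz_1$ differs from that of $\vecz_2$, yet $\vecx(\vecz_3)$ lies within $(1-\epsilon)/\lambda$ of $\vecx(\vecz_2)$''---a purely inter-cluster event. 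For $\lambda<0$, the inequality is automatic when $|l-l|=1$, and when $|l-l|=0$ reduces to $\|\vecx_2-\vecx_3\|\geq\epsilon/|\lambda|$; modulo the trivial disjunct (which contributes equally on both sides of the clusterability comparison), this becomes the intra-cluster event ``there exists a same-label $\vecz_3$ that lies at distance at least $\epsilon/|\lambda|$ from $\vecz_2$.''

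With this dictionary in hand I would prove each direction of the iff. For $(\Leftarrow)$, assume $\lossintra(\vecx^a)\leq\lossintra(\vecx^b)$ and $\lossinter(\vecx^a)\leq\lossinter(\vecx^b)$; recalling the negative sign in $\lossinter$, this says the total intra-cluster distance decreases and the total inter-cluster distance increases from $\vecx^b$ to $\vecx^a$. I would then pick $\lambda_1>0$ so that the threshold $(1-\epsilon)/\lambda_1$ lies at a level where the empirical distribution function of inter-cluster distances under $\vecx^a$ dominates that under $\vecx^b$, forcing the $\lambda>0$ event to be less likely for $\vecx^a$; and pick $\lambda_2<0$ symmetrically to handle intra-cluster distances. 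Direction $(\Rightarrow)$ reverses the argument: given witnessing $\epsilon,\lambda_1,\lambda_2$, extract a CDF ordering at a specific threshold which, after integration, recovers the ordering of the aggregated sums defining $\lossintra$ and $\lossinter$.

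The main obstacle will be bridging the existential quantifier over $\vecz_3$ that governs the event probabilities in Definition 2 and the aggregate-over-all-pairs structure of $\lossintra, \lossinter$. Pointwise dominance of every intra/inter distance is too much to hope for in general, so I would instead reason with the empirical distribution of intra- and inter-cluster distances: a sum inequality is an $L^1$-norm comparison on these measures, and the freedom in choosing $\epsilon$ together with $\lambda_1,\lambda_2$ allows one to align existence-based event probabilities with these $L^1$ orderings by sweeping the threshold. This continuity-and-monotonicity-in-parameters step is the technical heart of the argument; the case analysis above is essentially bookkeeping once it is in place.
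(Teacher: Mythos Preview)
Your case analysis is exactly the paper's: it too splits on the sign of $\lambda$, obtaining $\|\vecx(\vecz_2)-\vecx(\vecz_3)\|\geqslant -\epsilon/\lambda_1$ from the configuration $l(\vecz_1,\vecz_2)=l(\vecz_1,\vecz_3)=1$ with $\lambda_1<0$ (the intra-cluster event), and $\|\vecx(\vecz_2)-\vecx(\vecz_3)\|\leqslant(1-\epsilon)/\lambda_2$ from $l(\vecz_1,\vecz_2)\ne l(\vecz_1,\vecz_3)$ with $\lambda_2>0$ (the inter-cluster event). For $(\Leftarrow)$ the paper is more concrete than your sketch: it takes the intra threshold to be $C_1$, the mean value of $\lossintra(\vecx^b,\domd,l)$, and the inter threshold to be $C_2$, the mean of $\lossinter(\vecx^a,\domd,l)$, and then simply asserts the probability comparison at those thresholds. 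For $(\Rightarrow)$ the paper does not integrate at all; it restricts to the relevant label configuration, reads off the single-threshold tail inequality, and writes ``So $\lossintra(\vecx^a,\domd,l)\leqslant\lossintra(\vecx^b,\domd,l)$'' directly.

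The obstacle you flag---that an ordering of the sums $\lossintra,\lossinter$ is not in general equivalent to an ordering of the tail probabilities $\fp(\exists\,\vecz_3:\|\vecx(\vecz_2)-\vecx(\vecz_3)\|\geqslant C)$ at any particular threshold $C$---is real, and the paper's proof does not close it any more carefully than your outline: both directions in the paper pass from sum to tail (or back) in a single unjustified line. Your ``continuity-and-monotonicity-in-parameters'' idea is not sharper than the paper's choice of the mean as threshold; neither yields the claimed equivalence without further hypotheses on the distance distributions. In short, your approach \emph{is} the paper's approach, stated with more candour about the heuristic step both arguments rely on.
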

\begin{proof}
	$(\Rightarrow)$ There exists $\epsilon\in(0,1)$, and $\lambda\in\{\lambda_1,\lambda_2\}$ with $\lambda_1\lambda_2<0$, such that
	\begin{align*}
		\fp_{\vecz_1,\vecz_2\sim\domd} & \big( \exists\vecz_3:|l(\vecx^a(\vecz_1),\vecx^a(\vecz_2))-l(\vecx^a(\vecz_1),\vecx^a(\vecz_3))|-\epsilon\geqslant\lambda \|\vecx^a(\vecz_2)-\vecx^a(\vecz_3)\| \big) \\
		                               & \leqslant
		\fp_{\vecz_1,\vecz_2\sim\domd} \big( \exists\vecz_3:|l(\vecx^b(\vecz_1),\vecx^b(\vecz_2))-l(\vecx^b(\vecz_1),\vecx^b(\vecz_3))|-\epsilon\geqslant\lambda \|\vecx^b(\vecz_2)-\vecx^b(\vecz_3)\| \big)
	\end{align*}
	When $l(\vecz_1,\vecz_2)=1,l(\vecz_1,\vecz_3)=1$, and $\lambda=\lambda_1<0$,
	\begin{equation*}
		\fp_{\vecz_1,\vecz_2\sim\domd} \big( \exists\vecz_3: \|\vecx^a(\vecz_2)-\vecx^a(\vecz_3)\|\geqslant-{\epsilon\over{\lambda_1}} \big)
		\leqslant
		\fp_{\vecz_1,\vecz_2\sim\domd} \big( \exists\vecz_3: \|\vecx^b(\vecz_2)-\vecx^b(\vecz_3)\|\geqslant-{\epsilon\over{\lambda_1}} \big).
	\end{equation*}
	So  $\lossintra(\vecx^a,\domd,l)\leqslant\lossintra(\vecx^b,\domd,l).$
	And let $l(\vecz_1,\vecz_2)=1,l(\vecz_1,\vecz_3)=0$, and $\lambda=\lambda_2>0$, then
	\[
		\fp_{\vecz_1,\vecz_2\sim\domd} \big( \exists\vecz_3: \|\vecx^a(\vecz_2)-\vecx^a(\vecz_3)\|\leqslant{1-\epsilon\over{\lambda_1}} \big)
		\leqslant
		\fp_{\vecz_1,\vecz_2\sim\domd} \big( \exists\vecz_3: \|\vecx^b(\vecz_2)-\vecx^b(\vecz_3)\|\leqslant{1-\epsilon\over{\lambda_1}} \big).
	\]
	So $\lossinter(\vecx^a,\domd,l)\leqslant\lossinter(\vecx^b,\domd,l).$

	$(\Leftarrow)$ We have
	\[
		\sum_{l(\vecx^a(\vecz_1),\vecx^a(\vecz_2))=1} \|\vecx^a(\vecz_1)-\vecx^a(\vecz_2)\| \leqslant \sum_{l(\vecx^b(\vecz_1),\vecx^b(\vecz_2))=1} \|\vecx^b(\vecz_1)-\vecx^b(\vecz_2)\|.
	\]
	Denote $C_1$ as the mean value of $\lossintra(\vecx^b,\domd,l)$, then
	\[
		\fp_{\vecz_1,\vecz_2\sim\domd} \big( \exists\vecz_3: \|\vecx^a(\vecz_2)-\vecx^a(\vecz_3)\|\geqslant C_1 \big)
		\leqslant
		\fp_{\vecz_1,\vecz_2\sim\domd} \big( \exists\vecz_3: \|\vecx^b(\vecz_2)-\vecx^b(\vecz_3)\|\geqslant C_1 \big).
	\]
	In like manner, denote $C_2$ as the mean value of $\lossinter(\vecx^a,\domd,l)$, then
	\[
		\fp_{\vecz_1,\vecz_2\sim\domd} \big( \exists\vecz_3: \|\vecx^a(\vecz_2)-\vecx^a(\vecz_3)\|\leqslant C_2 \big)
		\leqslant
		\fp_{\vecz_1,\vecz_2\sim\domd} \big( \exists\vecz_3: \|\vecx^b(\vecz_2)-\vecx^b(\vecz_3)\|\leqslant C_2 \big).
	\]
\end{proof}

\begin{theorem}
	For two encoders $\vecx^a, \vecx^b$, a distribution $\domd$, if $\eta$ is a random variable and its support is a subset of $\fr^+$ , then
	\[
		\lossfunwr(\vecx^a,\domd)\leqslant\lossfunwr(\vecx^b,\domd) \Leftrightarrow \fe\left[C_{\setb,\eta}(\doms,\domd;\vecx^a)\right]\geqslant \fe\left[C_{\setb,\eta}(\doms,\domd;\vecx^b)\right]
	\]
\end{theorem}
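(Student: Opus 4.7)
The plan is to link the nearest-neighbor functional $\lossfunwr$ to the rectangle mass-ratio $C_{\setb,\eta}$ through a pointwise covering argument, then use the randomness of $\eta$ to average away the scale mismatch between these two otherwise geometrically different quantities. For a fixed encoder $\vecx$, define $r_\vecx(\vecz_d) = \inf_{\vecz_s\sim\doms}\|\vecx(\vecz_d)-\vecx(\vecz_s)\|$ so that $\lossfunwr(\vecx,\domd) = \fe_{\vecz_d\sim\domd}[r_\vecx(\vecz_d)]$. The goal is then to equate (i) the mean of $r_\vecx$ under $\domd$ with (ii) the expected worst-case ratio $\doms(b)/\domd(b)$ over axis-aligned feature-space rectangles $b$ with $\domd$-mass at least $\eta$, averaged over random $\eta$.

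For the forward direction, fix any $b \in \setb$ with $\domd(b) \geq \eta$ in the pushforward feature space. Each target feature $\vecx(\vecz_d) \in b$ either has its nearest source feature already inside $b$, in which case it contributes directly to $\doms(b)$, or outside $b$, in which case $\vecx(\vecz_d)$ lies in a boundary slab of $b$ of width at most $r_\vecx(\vecz_d)$. Bounding the target mass of this boundary slab by a Markov-type inequality on $r_\vecx$ restricted to $b$, one obtains a functional lower bound $C_{\setb,\eta}(\doms,\domd;\vecx) \geq F(\lossfunwr(\vecx,\domd),\eta)$ with $F$ monotonically decreasing in its first argument. Taking expectation in $\eta$ preserves this monotonicity and yields the implication $\lossfunwr(\vecx^a,\domd) \leq \lossfunwr(\vecx^b,\domd) \Rightarrow \fe[C_{\setb,\eta}(\doms,\domd;\vecx^a)] \geq \fe[C_{\setb,\eta}(\doms,\domd;\vecx^b)]$.

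For the converse, I would argue by contrapositive. If $\lossfunwr(\vecx^a,\domd) > \lossfunwr(\vecx^b,\domd)$, then on a set of positive $\domd$-measure one has $r_{\vecx^a}(\vecz_d) > r_{\vecx^b}(\vecz_d)$. Around any such target feature, a sufficiently small axis-aligned rectangle with $\domd$-mass just above an appropriate $\eta$ in the support is essentially disjoint from the source features under $\vecx^a$ while still intersecting them under $\vecx^b$, pushing $C_{\setb,\eta}(\doms,\domd;\vecx^a)$ strictly below $C_{\setb,\eta}(\doms,\domd;\vecx^b)$ at that $\eta$. Because the support of $\eta$ is contained in $\fr^+$ and sweeps the relevant scales, the inequality survives under expectation, giving the reverse implication.

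The hard part will be making the scale-matching rigorous in the converse direction: the weight ratio inspects rectangles of all sizes, while the nearest-neighbor functional sees only a single Euclidean distance per point, and it is precisely the randomness of $\eta$ that reconciles the two. A clean way to close this gap is to show that, under the full-support hypothesis on $\eta$, both $\fe[C_{\setb,\eta}(\doms,\domd;\vecx)]$ and $-\lossfunwr(\vecx,\domd)$ are strictly monotone transformations of the same underlying object — the distribution of the nearest-neighbor distances $r_\vecx(\vecz_d)$ under $\domd$ — reducing the equivalence to a single monotonicity statement about this common distribution.
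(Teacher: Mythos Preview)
Your approach is quite different from the paper's. The paper gives a four-line chain of asserted equivalences: it passes from $\lossfunwr(\vecx^a,\domd)\leqslant\lossfunwr(\vecx^b,\domd)$ directly to a pointwise-in-$\eta$ probability inequality $\fp(\|\vecx^a(\vecz_d)-\vecx^a(\vecz_s)\|\leqslant\eta)\geqslant\fp(\|\vecx^b(\vecz_d)-\vecx^b(\vecz_s)\|\leqslant\eta)$, then rewrites this as a conditional probability over boxes $b_0\in\setb$, and finally identifies that with the expected weight ratio. There is no covering, no boundary slab, and no Markov step; each ``$\Leftrightarrow$'' is simply asserted. Whatever one thinks of the rigour, the intended argument is that both sides of the theorem are governed by the \emph{same} CDF-type object in $\eta$, so the equivalence is declared immediate once the rewritings are accepted.

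Your route, by contrast, has a structural gap in the forward direction. From a one-sided bound $C_{\setb,\eta}(\doms,\domd;\vecx)\geqslant F(\lossfunwr(\vecx,\domd),\eta)$ with $F$ decreasing in its first argument, the hypothesis $\lossfunwr(\vecx^a,\domd)\leqslant\lossfunwr(\vecx^b,\domd)$ yields only $F(\lossfunwr(\vecx^a,\domd),\eta)\geqslant F(\lossfunwr(\vecx^b,\domd),\eta)$; it does \emph{not} let you compare $C_{\setb,\eta}(\doms,\domd;\vecx^a)$ with $C_{\setb,\eta}(\doms,\domd;\vecx^b)$, since a lower bound on each says nothing about their relative order. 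The same problem undermines your closing reduction: $C_{\setb,\eta}$ is an infimum over \emph{all} axis-aligned rectangles of target mass at least $\eta$ and depends on the full joint geometry of the two pushforward measures in $\fr^d$, not merely on the law of $r_\vecx(\vecz_d)$. Two encoders can induce identical nearest-neighbor distance distributions yet very different worst-case box ratios, so $\fe[C_{\setb,\eta}]$ is not, in general, a strictly monotone functional of that law alone. To make your plan work you would need a matching upper bound (or an exact identity) tying $C_{\setb,\eta}$ to $r_\vecx$; that is precisely what the paper's chain of equivalences is implicitly claiming, so you would end up having to justify those same steps anyway.
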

\begin{proof}
	\begin{align*}
		                     & \lossfunwr(\vecx^a,\domd)\leqslant\lossfunwr(\vecx^b,\domd) \\
		\Leftrightarrow\quad &
		\fp_{ \substack{\vecz_d\sim\domd \\ \vecz_s\sim\doms} } (\|\vecx^a(\vecz_d)-\vecx^a(\vecz_s)\|\leqslant\eta) \geqslant
		\fp_{ \substack{\vecz_d\sim\domd \\ \vecz_s\sim\doms} } (\|\vecx^b(\vecz_d)-\vecx^b(\vecz_s)\|\leqslant\eta) \\
		\Leftrightarrow\quad & (\forall b_0\in\setb)
		\fp_{ \substack{\vecz_d\sim\domd \\ \vecz_s\sim\doms} } (\vecx^a(\vecz_s)\in b_0| \vecx^a(\vecz_d)\in b_0) \geqslant
		\fp_{ \substack{\vecz_d\sim\domd \\ \vecz_s\sim\doms} } (\vecx^b(\vecz_s)\in b_0| \vecx^b(\vecz_d)\in b_0) \\
		\Leftrightarrow\quad &
		\fp_{ \vecz_s\sim\doms } (\vecx^a(\vecz_s)\in b_0| \domt(b_0;\vecx^a)\geqslant\eta ) \geqslant
		\fp_{ \vecz_s\sim\doms } (\vecx^b(\vecz_s)\in b_0| \domt(b_0;\vecx^b)\geqslant\eta ) \\
		\Leftrightarrow\quad &
		\fe\left[C_{\setb,\eta}(\doms,\domd;\vecx^a)\right]\geqslant \fe\left[C_{\setb,\eta}(\doms,\domd;\vecx^b)\right]
	\end{align*}
\end{proof}
\section{Additional Experimental Details and Results}

We present the structure of the paper in Figure \ref{fig:structure} and the most important contributions in our work are Theorem 2 and 3, both of which aim to turn the abstract and somewhat too theoretical assumptions into practical loss functions. Although Theorem 1 seems like a straightforward extension of previous work \cite{BenDavid2014}, it plays a fundamental role in the paper. Through the DA-learnability shown in Theorem 1, we can see that the three assumptions imposed on the distribution of two domains in Section 3 are sufficient for solving the domain adaptive re-ID problem. In other words, the sufficiency of reinforcing the three assumptions in Section 4 is shown via Theorem 1.

\begin{figure}[h]
	\centering
	\includegraphics[width=0.45\textwidth]{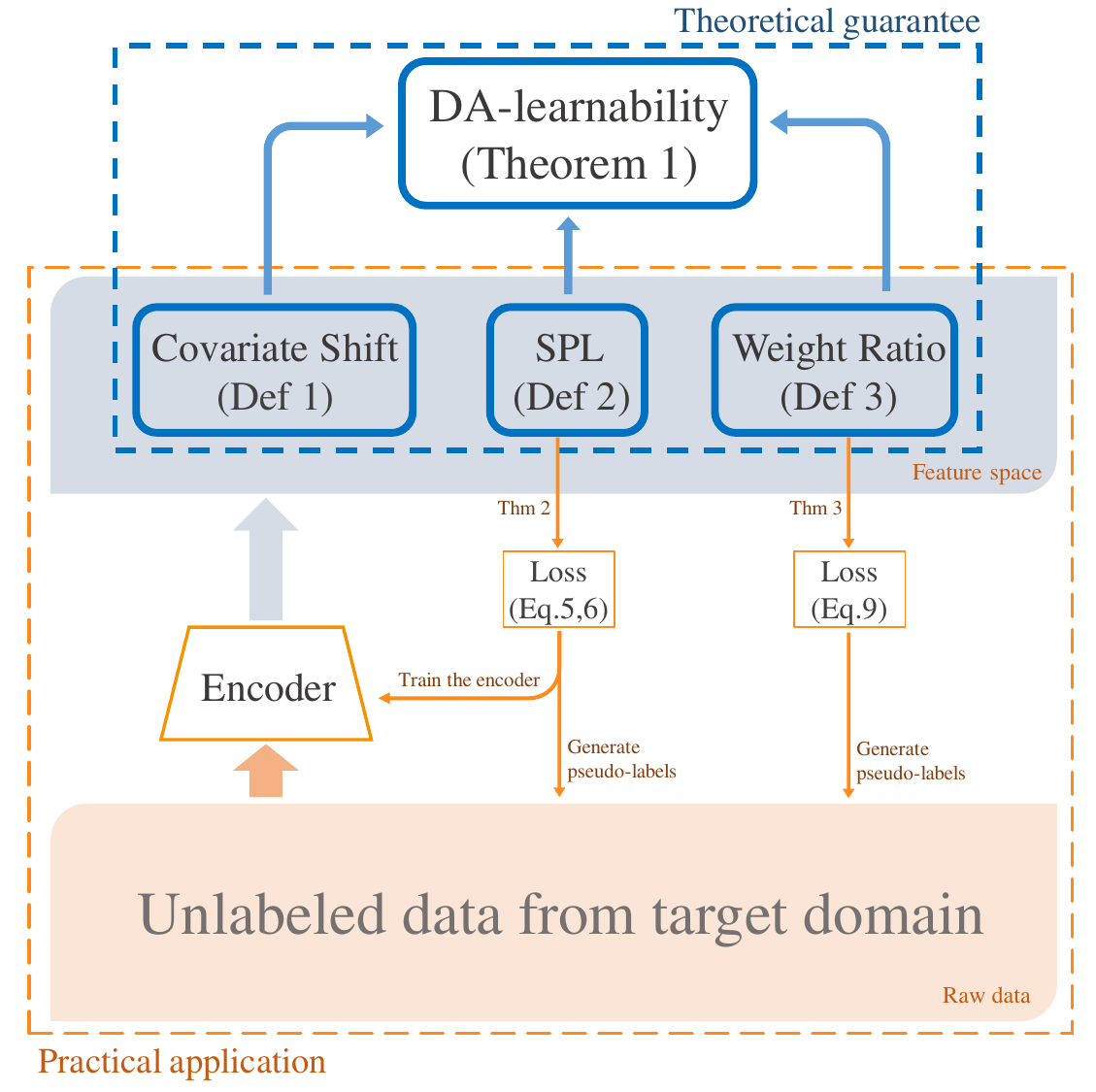}
	\caption{Structure of the paper.}
	\label{fig:structure}
\end{figure}

\subsection{Visualization of datasets and results}
To understand the variations between different domains more clearly, Figure \ref{fig:dataset} presents some samples from the datasets used in our experiments. These datasets all have their own special characteristics. For instance, people riding a bicycle are common in Market-1501, while these people are rare in DukeMTMC-reID. More importantly, the images in these re-ID datasets are heavily related to the cameras, which means that the images contain information closely knitted together with the camera, such as background, viewpoints or lighting condition.

\begin{figure}
	\centering
	\subfloat[Sample images from Market-1501 \cite{Zheng2015Dec}]{\includegraphics[width=0.45\textwidth]{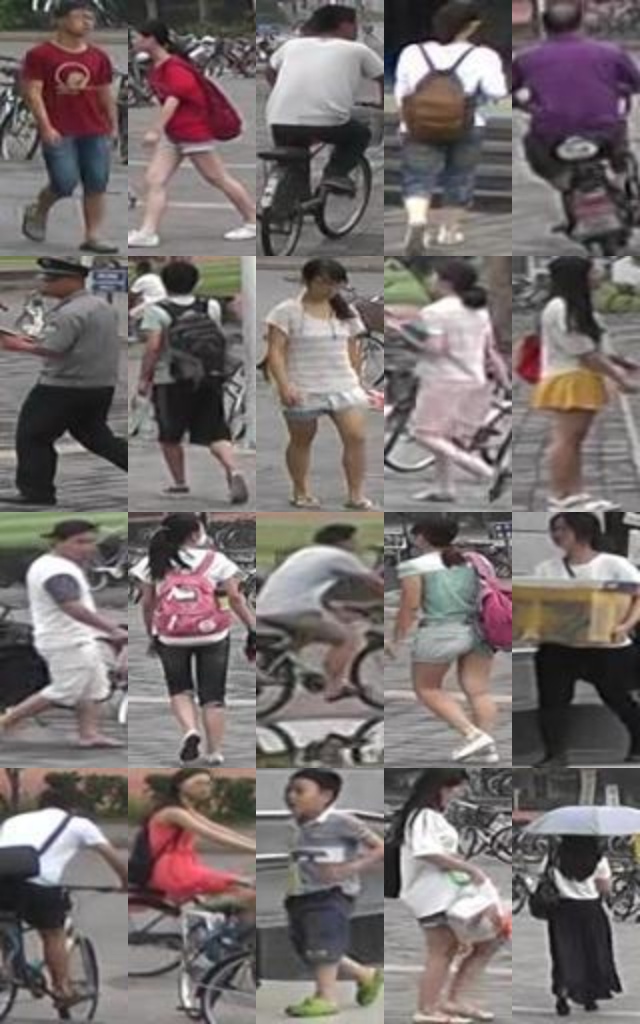}} \quad
	\subfloat[Sample images from DukeMTMC-reID \cite{Ristani2016}]{\includegraphics[width=0.45\textwidth]{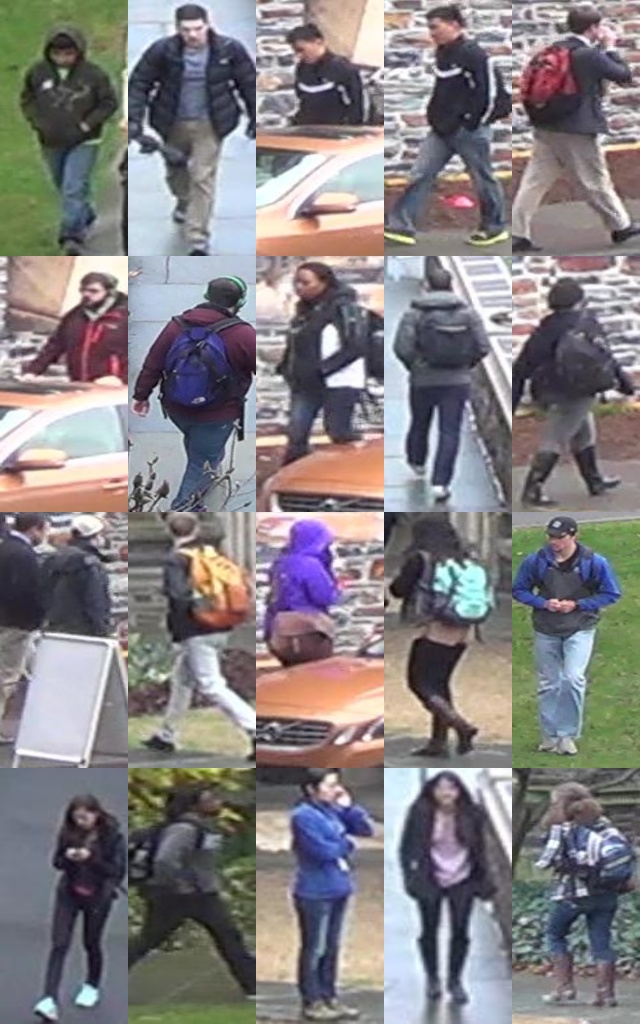}} \\
	\subfloat[Sample images from VeRi-776 \cite{Liu7553002}]{\includegraphics[width=0.45\textwidth]{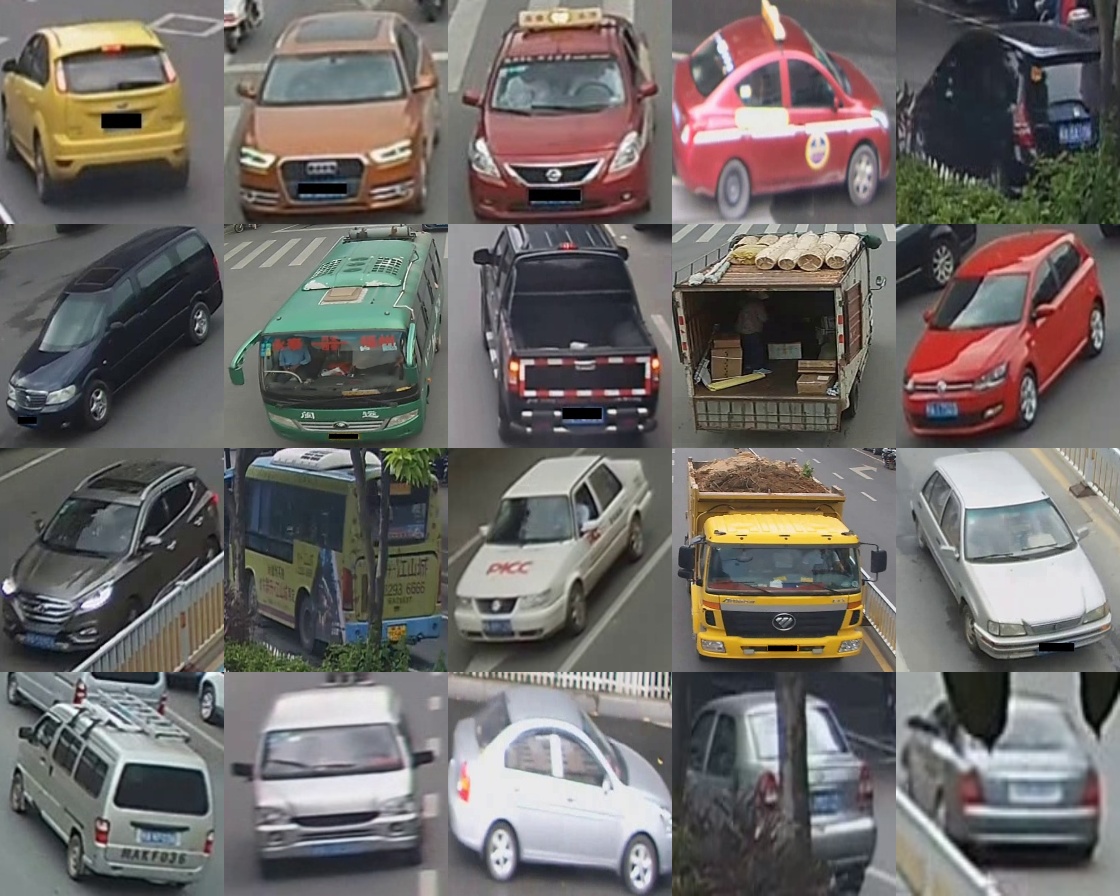}} \quad
	\subfloat[Sample images from PKU-VehicleID \cite{liu2016deep}]{\includegraphics[width=0.45\textwidth]{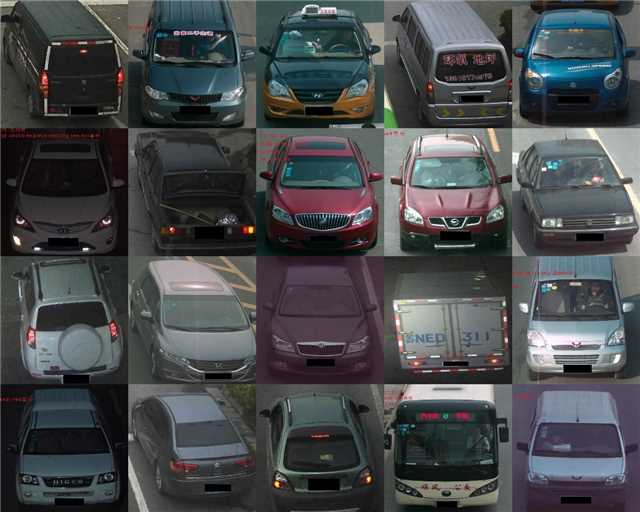}}
	\caption{Sample images from different datasets.}
	\label{fig:dataset}
\end{figure}

Moreover, we present some generated samples of SPGAN for vehicle re-ID. As shown in Figure \ref{fig:gen}, their image-image translation indeed works but fails to produce satisfactory re-ID results as person re-ID. This indicates that either their proposed generative method is not suitable for unsupervised domain adaptive vehicle re-ID, or their parameters need careful tuned for a new task. 

\begin{figure}
	\centering
	Original\hspace{4.5em}Generated\hspace{4.5em}Original\hspace{4.5em}Generated
	\includegraphics[width=0.8\textwidth]{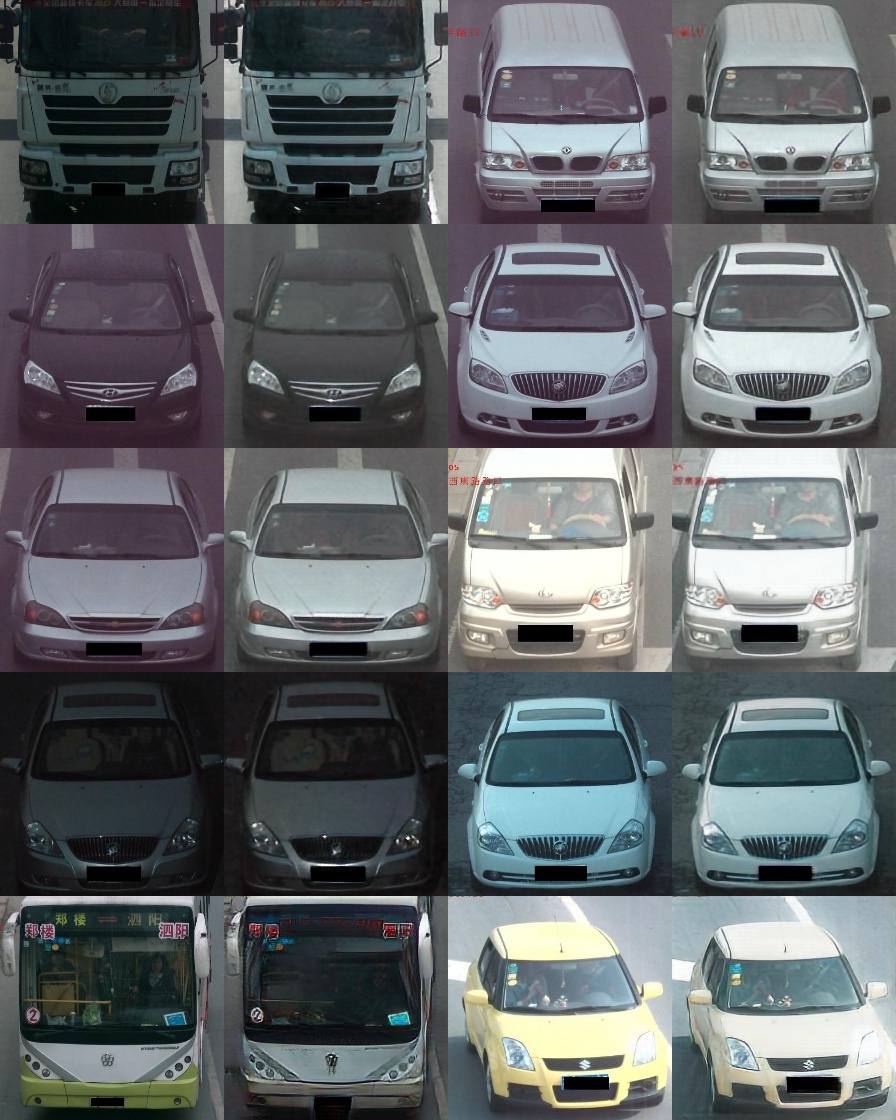}
	\caption{Sample images generated by SPGAN on vehicle datasets.}
	\label{fig:gen}
\end{figure}

\subsection{Encoder network}
Basically, the encoder network is ResNet-50 \cite{he2016deep} pre-trained on ImageNet and the whole network is presented in Figure \ref{fig:net}.

\paragraph{Person re-ID.} 
The size of input images is $256\times128\times3$, so the output of \texttt{conv5} is $8\times4\times2048$ and a average pool layer is added after \texttt{conv5} to have a output of size $1\times1\times2048$. We denote the output of this layer as \texttt{feat1}.
During training on the source domain, \texttt{feat1} is connected to a fully-connected (\texttt{fc}) layer with output 2048, denoted \texttt{fc0}, then the 2048 \texttt{fc} layer is connected to a \texttt{fc} layer with output 751 (Market-1501) or 702 (DukeMTMC-reID). Let the output of finally \texttt{fc} layer be \texttt{fc1}. The loss functions are \texttt{Softmax(fc1)} and \texttt{Triplet(feat1)}, which are added directly (without extra balancing parameter).
The model is trained by Adam optimizer \cite{Kingma2014Dec}.
Training parameters are set as follows: batch size 128 (PK sampling with P=16, K=8); maximum number epochs 70; learning rate 3e-4.

When training with data from target domain, there is no \texttt{fc1} layer and we use two triplet loss, that is \texttt{Triplet(feat1)} and \texttt{Triplet(fc0)}. The trick of using two triplet losses comes from \cite{Vo2018Mar}.
The model is trained by stochastic gradient descent and in each iteration step we perform data augmentation (random flip and random erasing) on the data.
Training parameters are set as follows: batch size 128 (PK sampling with P=32, K=4); momentum 0.9; maximum number epochs 70; learning rate 6e-5.
The networks are trained with two TITAN X GPUs.

\begin{figure}
    \centering
	\includegraphics[width=\textwidth]{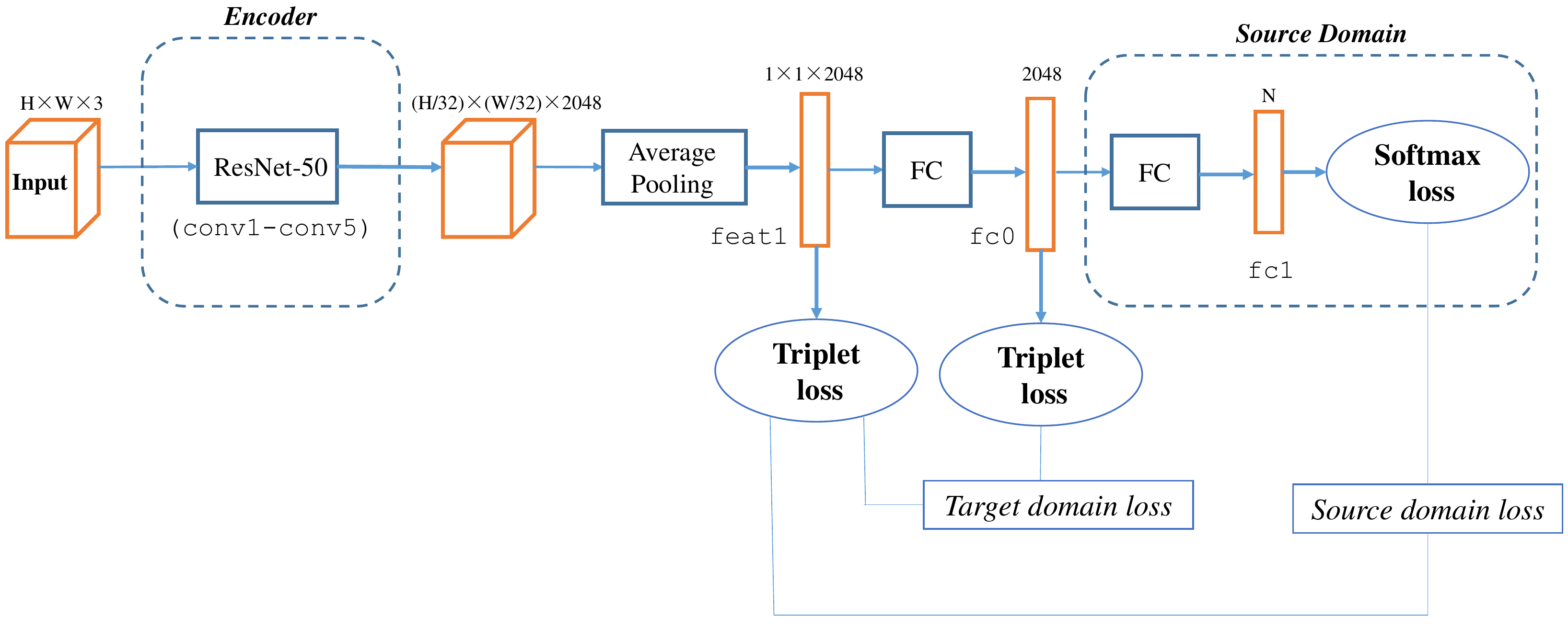}
	\caption{The architecture of our unsupervised domain adaptive network with ResNet-50 based encoder.}
	\label{fig:net}
\end{figure}

\paragraph{Vehicle re-ID.}
All parameters including network architecture are same as person re-ID, except the size of input data. The input data here are resized to $224\times224\times3$ and the output of \texttt{conv5} is $7\times7\times2048$.

\subsection{More results}

\paragraph{Effectiveness of $\diswr$.} \label{para:diswr}
From Table \ref{ta:person}, Table \ref{ta:vehicle} and Figure \ref{fig:conver}, we observe that in a practical view, using $\diswr$ actually is not appealing. We think the reasons are two folds. Firstly, the effectiveness of $\diswr$ depends on the distribution of source and target domain. In Fig.(\ref{fig:diswr}), we design a simple example in 2D feature space to show the validity of $\diswr$. In the left figure, the grey points denotes the extracted feature from source space and the colored points denotes the features of target data with real label. In the middle figure, we show the pseudo labels generated with \texttt{DBSCAN} when setting $\lambda=0$, i.e., not using $\diswr$. In the right figure, the results with $\diswr$ is shown. Comparing the middle figure and the right figure, we can see that $\diswr$ is important in such situation. The key idea in this demo is that those ``easy'' target points happen to be near the source data. Here, ``easy'' target points means the points belonging to the same ID are ``close'' in the extracted feature space with present encoder. This example can be also used for classification tasks since the Weight Ratio is a shared assumptions between our work and \cite{BenDavid2014}. Secondly, $\diswr$ is derived from $\lossfunwr$, but the potential value of $\lossfunwr$ is not fully exploited in our algorithm. Thus, using $\diswr$ in practical application is not appealing. However, when not using $\diswr$, the results are stable and good enough and already outperforms existing methods by large margin, which shows the power of the self-training scheme in domain adaptive re-ID problems. {\em For real applications, if the computation resources are limited, we recommend just setting $\lambda=0$ and not making the effort to search for an optimal $\lambda$.}
\begin{figure}
	\centering
    \includegraphics [width=0.32\textwidth]{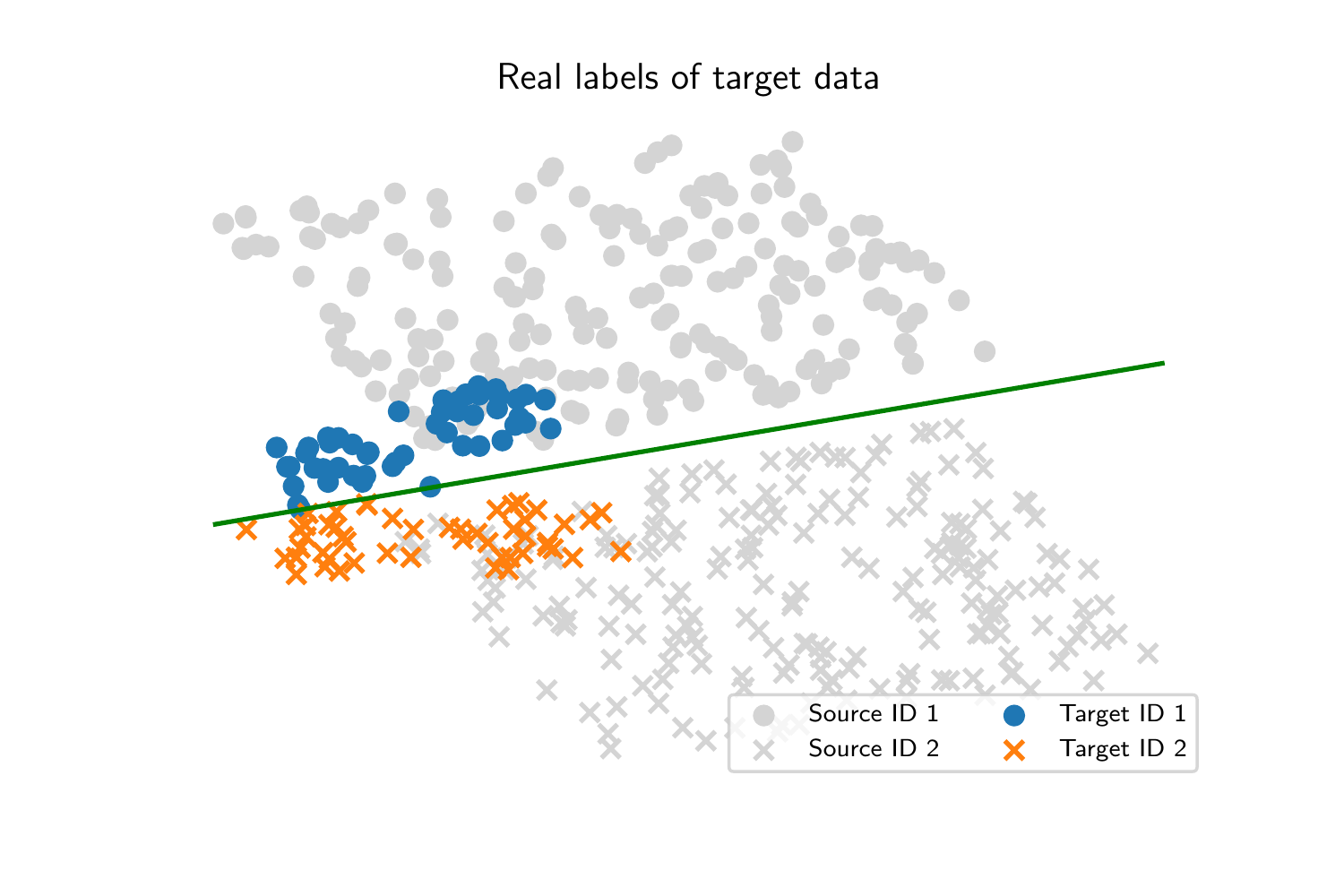}
    \includegraphics [width=0.32\textwidth]{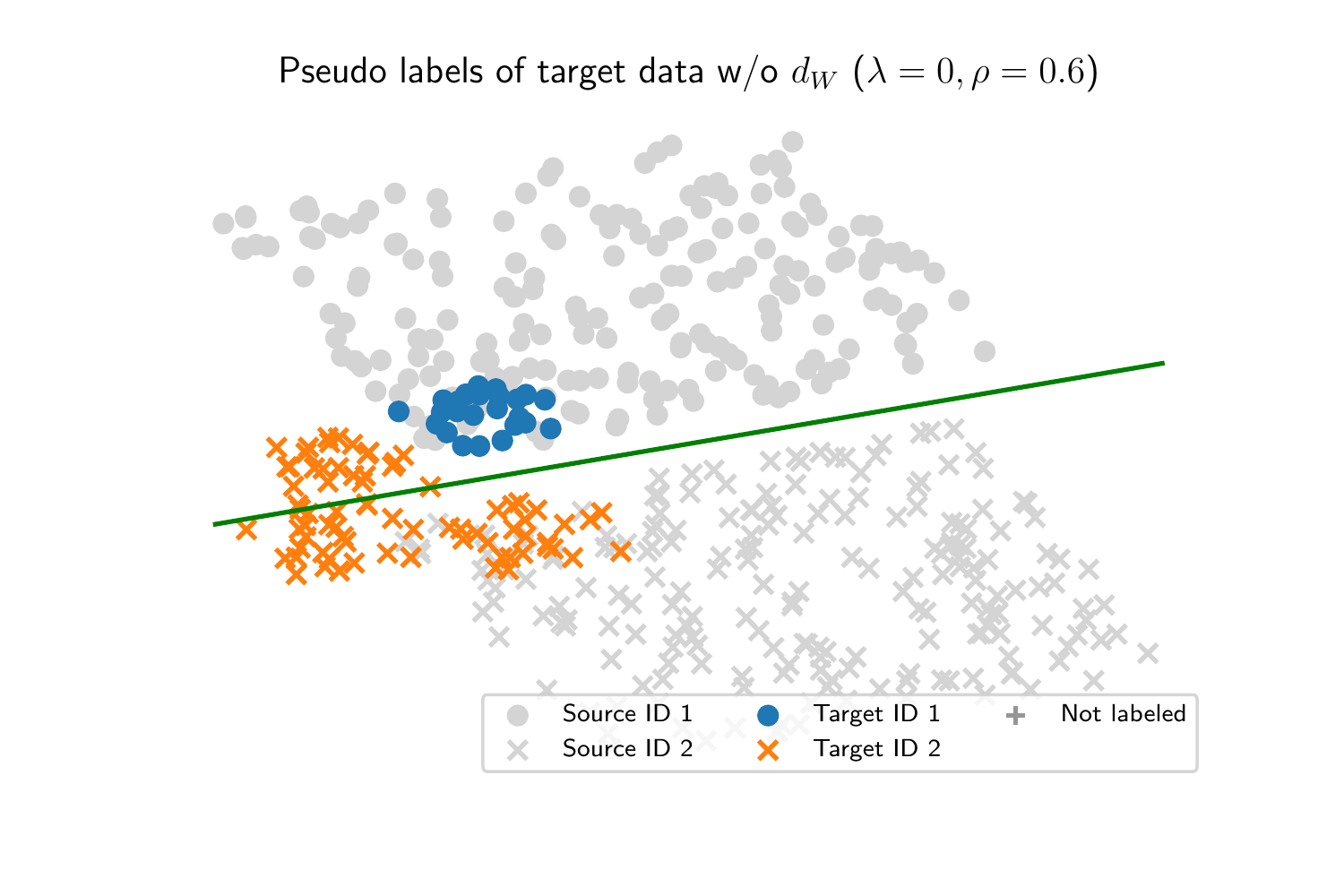}
    \includegraphics [width=0.32\textwidth]{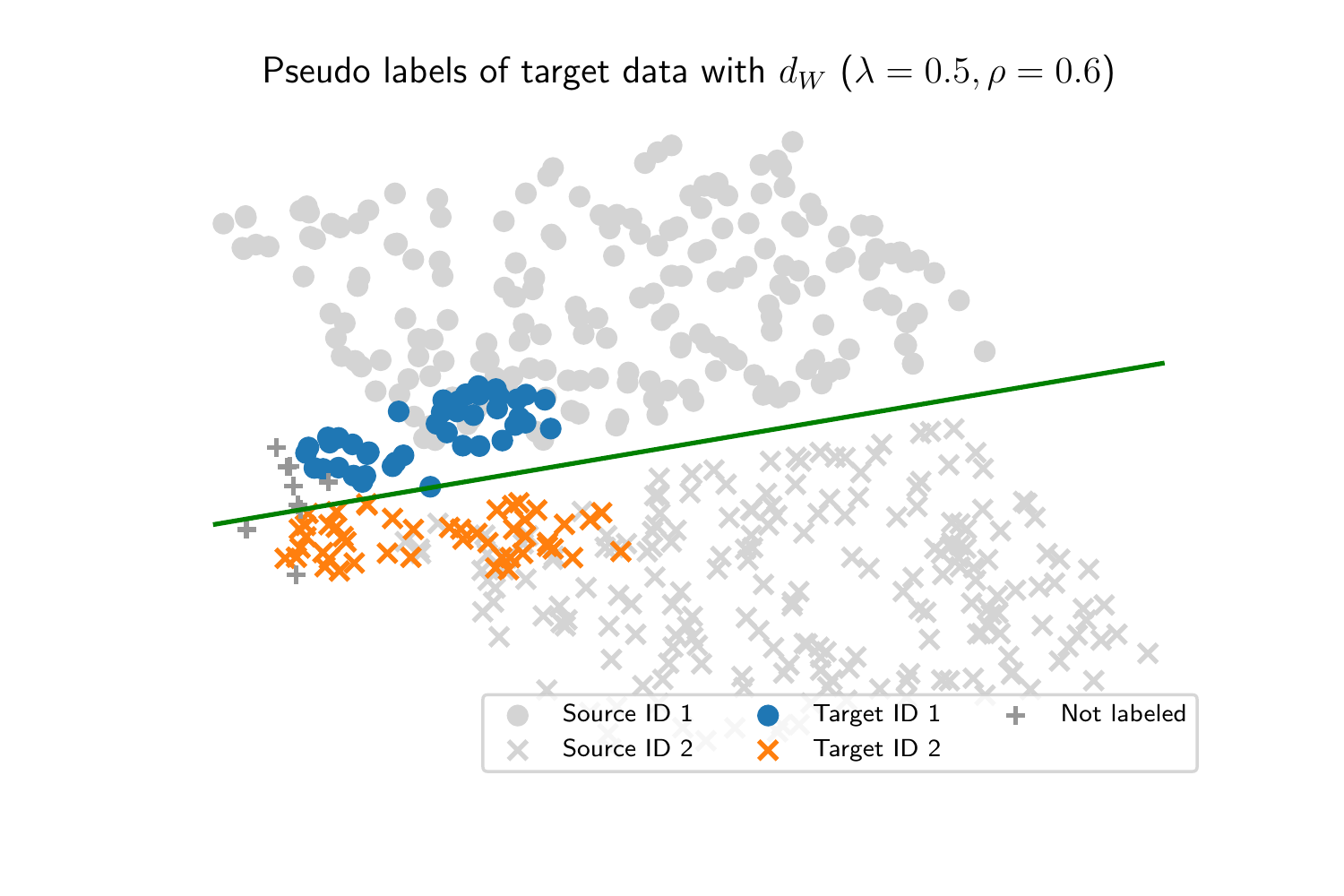}
	\caption{An example to show the effectiveness of $\diswr$.}
	\label{fig:diswr}
\end{figure}

\paragraph{Comparison of distance metrics.}
As for other contextual distance metrics, we test the performance of using the original Jaccard distance with or without $\diswr$ (also set $\lambda=0.1$).
For the Jaccard distance, we first compute the $k=20$ nearest neighbor set and then compute the distance between the sets. 
Another conclusion is that taking $\diswr$ into consideration is also beneficial for Jaccard distance. 
However, both of the two distance metrics are worse than the self-training baseline, i.e., Euclidean distance. The reason is that the Jaccard distance only consider the nearest neighbor sets and therefore pairs without overlapping nearest neighbors will have a Jaccard distance 1, which is too strict to generate enough training pairs. The shortcoming also leads to a slow or even halted increasing of accuracy, for more details see the convergence comparison paragraph and Figure \ref{fig:conver}.
As is shown in Table \ref{ta:dis}, $k$-reciprocal encoding employed in our method positively improve the performance of plain Jaccard distance. 

\begin{table}
	\centering
	\caption{Comparison of distance metrics.}
	\label{ta:dis}
	\begin{tabular}{c|cccc}
		\hline\hline
		\multirow{2}{*}{Methods} & \multicolumn{4}{c}{DukeMTMC-reID$\rightarrow$Market-1501}  \\ \cline{2-5}
		                               & rank-1    & rank-5    & rank-10   & mAP       \\ \hline
		Jaccard distance               & 63.8      & 80.0      & 85.3      & 37.1      \\
		Jaccard distance with $\diswr$ & \bf{65.7} & \bf{81.2} & \bf{86.5} & \bf{38.1} \\
		\hline\hline
	\end{tabular}
\end{table}

\paragraph{Comparison of clustering methods.}
Due to the restrictions of a suitable clustering method, we only test a version with affinity propagation \cite{frey2007clustering}. 
For task DukeMTMC-reID$\rightarrow$Market-1501, we investigate the effectiveness of affinity propagation with other distance metrics.
It is obvious that affinity propagation is not a proper clustering method for the reason that all data are used for clustering, which means it cannot avoid those pairs of low confidence.
As shown in Table \ref{ta:clu}, a interesting fact is that with affinity propagation just using Euclidean distance is better than our proposed distance. The reason behind this phenomenon is that the number of IDs (clusters) generated by affinity propagation is much larger when using our proposed distance. In Figure \ref{fig:ids}, we show the number of IDs with respect to each iteration step. Using our distance leads to a larger number of clusters out of the reason that our distance will enlarge the gap between the dissimilar pairs, which is ought to be beneficial of getting rid of these helpless stray samples. However, affinity propagation is a clustering method that every sample is assigned to some cluster and therefore using our distance performs worse than Euclidean distance. 

\begin{table}
	\centering
	\caption{Comparison of clustering methods.}
	\label{ta:clu}
	\begin{tabular}{c|cccc}
		\hline\hline
		Distance Metrics  & rank-1    & rank-5    & rank-10   & mAP       \\ \hline
		Euclidean         & \bf{63.5} & \bf{76.6} & \bf{80.7} & \bf{36.9} \\
		Ours w/o $\diswr$ & 62.4      & 74.6      & 78.9      & 35.2      \\
		Ours              & 62.4      & 74.5      & 78.8      & 35.6      \\
		\hline\hline
	\end{tabular}
\end{table}

\begin{figure}
	\centering
	\includegraphics[width=0.5\textwidth]{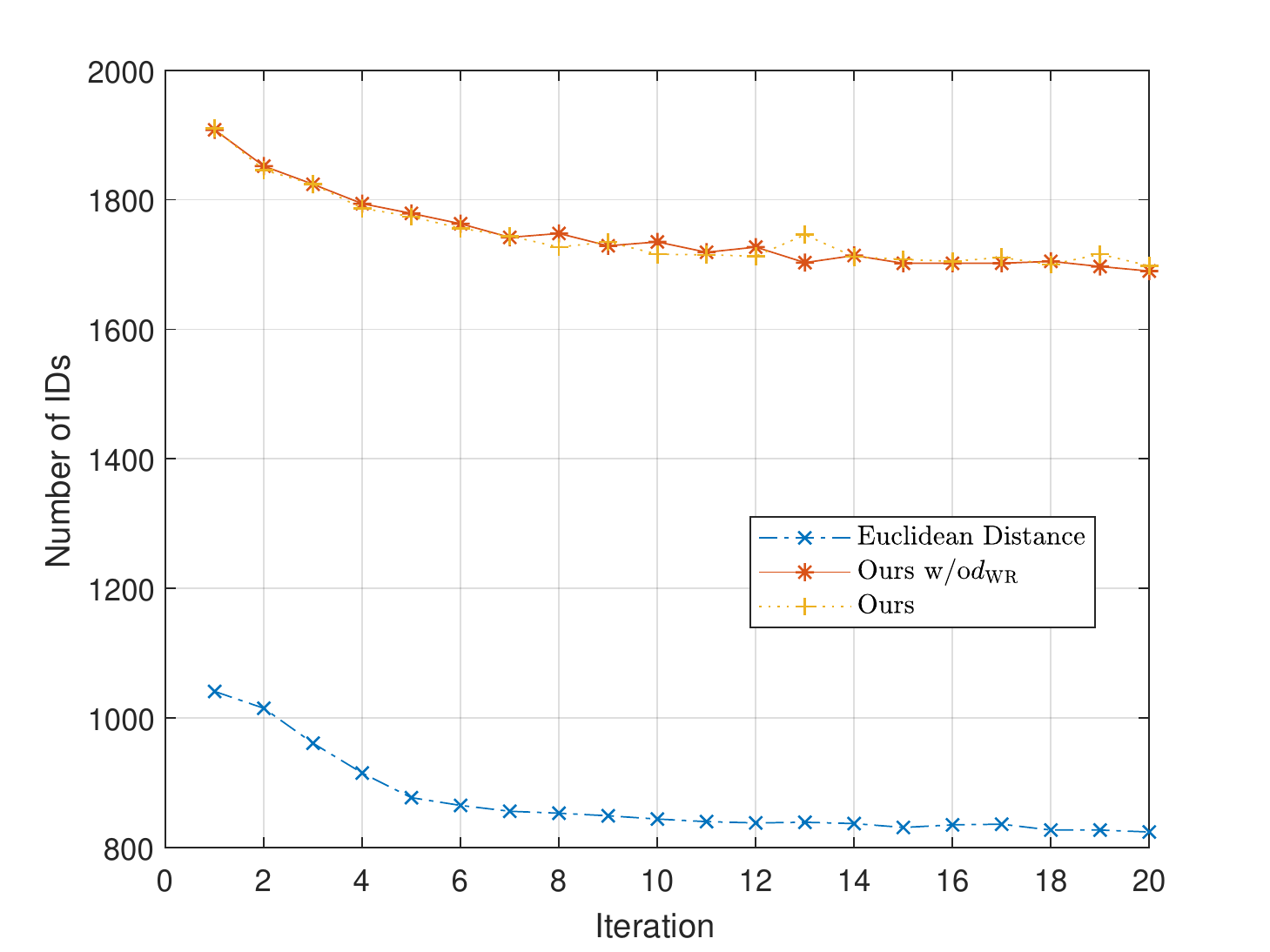}
	\caption{The number of IDs (clusters) on the target dataset of each iteration step when using affinity propagation.}
	\label{fig:ids}
\end{figure}

\paragraph{Parameters analysis}
Among all the parameters in our algorithm, the most influencing parameters are the percentage $p$ and the balancing parameter $\lambda$. 
Since the influence of $\lambda$ has been reported, here we perform experiments with a series of different $p$ from DukeMTMC-reID to Market-1501 and the results are shown in Table \ref{ta:plam}. 
As we can see from the table, even a small change ($2\times10^{-4}$) of $p$ has a discernible impact on the final accuracy. It is because that we use large scale datasets and the number of all possible pairs from target datasets is large. Take Market-1501 as an example, the number of training images is 12,936, so the number of all data pairs is over $8\times10^{7}$. Thus a small change of $p$ can cause a large change of the threshold.

\begin{table}
	\centering
	\caption{The impact of the parameter $p$ on person re-ID (from DukeMTMC-reID to Market-1501).}
	\label{ta:plam}
	\begin{tabular}{c}
		\begin{tabular}{c|cccc}
			\hline\hline
			                        & rank-1    & rank-5    & rank-10   & mAP       \\ \hline
			$p=1.0\times10^{-3}$ & 72.7      & 87.4      & 91.7      & 49.0      \\
			$p=1.2\times10^{-3}$ & 73.3      & 86.0      & 89.5      & 49.6      \\
			$p=1.4\times10^{-3}$ & 74.2      & 88.0      & 92.1      & 50.8      \\
			$p=1.6\times10^{-3}$ & \bf{75.8} & \bf{89.5} & \bf{93.2} & \bf{53.7} \\
			$p=1.8\times10^{-3}$ & 75.7      & 89.1      & 92.8      & 52.2      \\
			$p=2.0\times10^{-3}$ & 75.1      & 88.7      & 92.3      & 51.6      \\
			$p=2.2\times10^{-3}$ & 72.9      & 87.4      & 91.7      & 49.2      \\
			\hline\hline
		\end{tabular}
	\end{tabular}
\end{table}

\paragraph{Convergence comparison.}

In Figure \ref{fig:conver_all}, we use DukeMTMC-reID as source domain and Market-1501 as target domain and we first show the convergence results with different distance metric and clustering method in (a) and (b). 
Several conclusions can be drawn from the curves: First, we can see that the Jaccard distance based version becomes more stable after adding $\diswr$; Second, the accuracy of the Jaccard distance based version almost stops increasing after 14 iterations, which is caused by the special property of Jaccard distance mentioned before; Third, using affinity propagation converges very fast and after about 8 iterations the accuracy stop increasing, which is caused by the inaccurate number of clusters and all the samples are used to train the network. Thus the loss functions fail to be minimized through sample selection step.
Moreover, we show the results with different $p$ in (c) and (d). It is obvious that all the curves have a similar convergence tendency, which demonstrates that our iteration process is robust with regard of the crucial parameter $p$.

\begin{figure}
    \centering
	\subfloat[Rank-1 curves of different methods]{\includegraphics[width=0.49\textwidth]{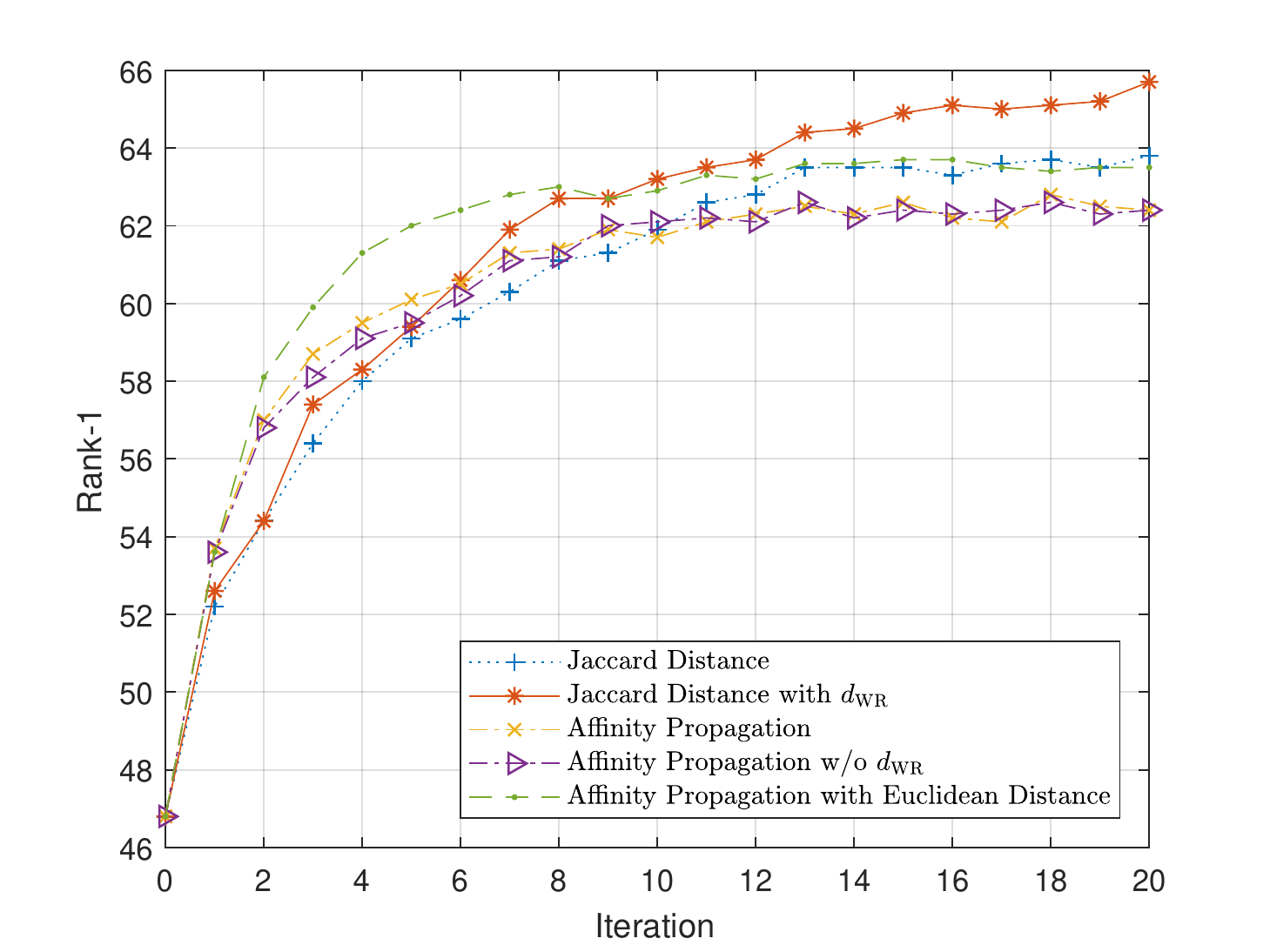}} 
	\subfloat[mAP curves of different methods]{\includegraphics[width=0.49\textwidth]{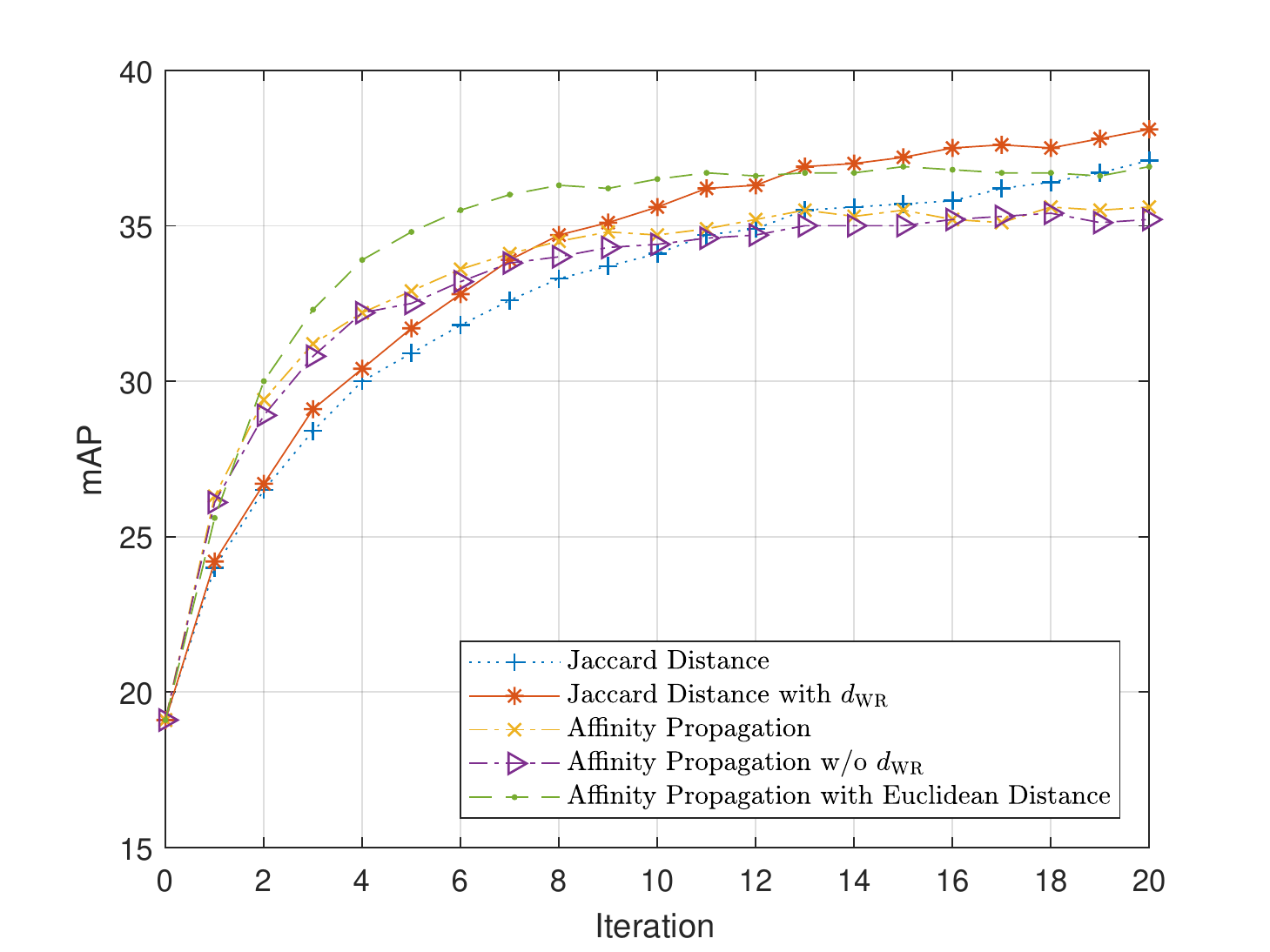}} \\
	\subfloat[Rank-1 curves with different $p$]{\includegraphics[width=0.49\textwidth]{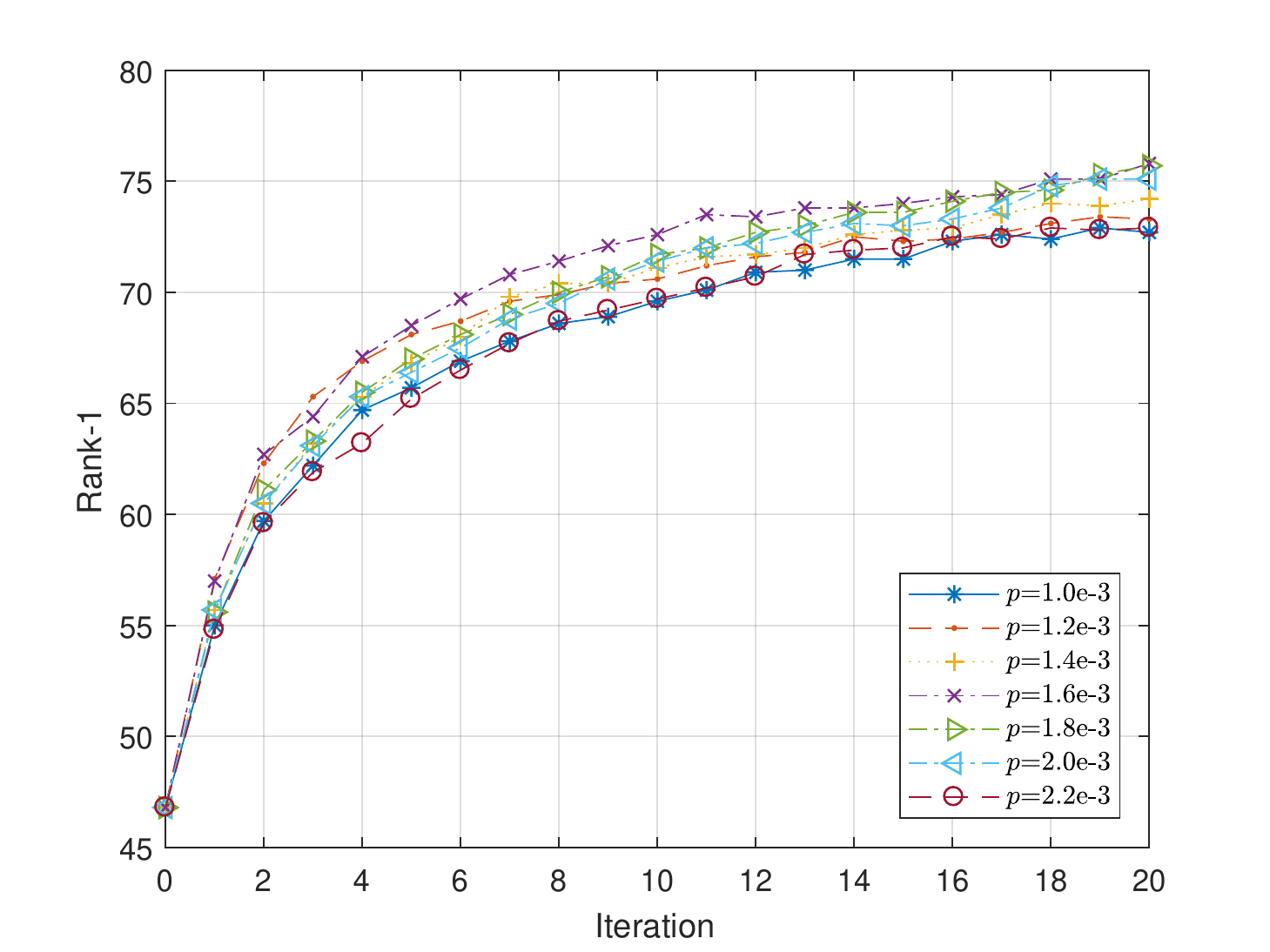}} 
	\subfloat[mAP curves with different $p$]{\includegraphics[width=0.49\textwidth]{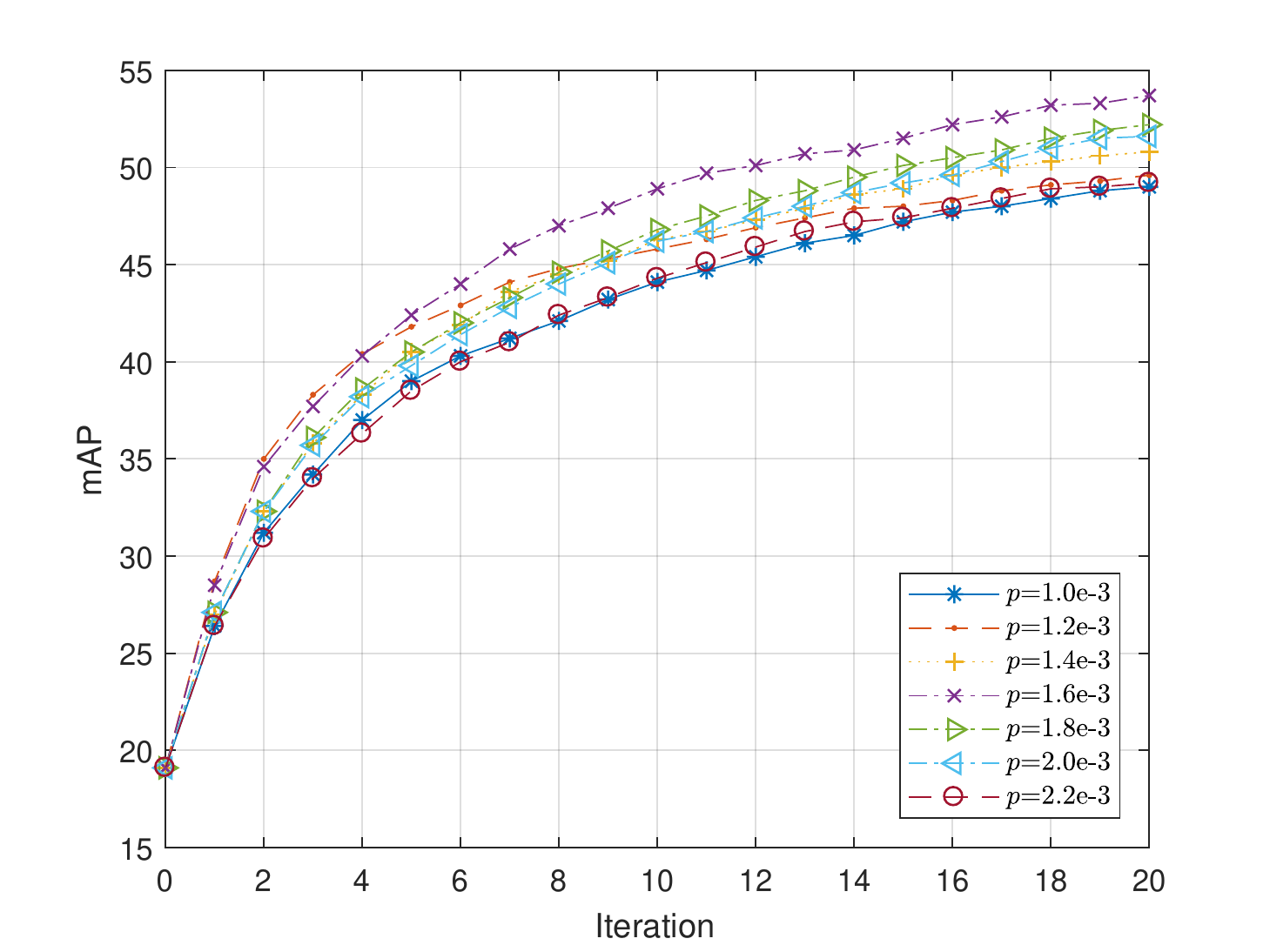}}
	\caption{Convergence comparison of different versions. We use DukeMTMC-reID as source domain and Market-1501 as target domain.}
	\label{fig:conver_all}
\end{figure}

\end{document}